\documentclass[letterpaper, 10 pt, journal, twoside]{IEEEtran}  
\IEEEoverridecommandlockouts  

\usepackage{amsthm}
\usepackage[dvipsnames]{xcolor} 
\usepackage{tikz} 
\usetikzlibrary{shapes,snakes} 
\usetikzlibrary{tikzmark} 
\usetikzlibrary{calc, arrows.meta, positioning} 
\usepackage{multirow} 
\usepackage{algorithm} 
\usepackage{algpseudocode} 
\usepackage{array} 
\usepackage{booktabs} 
\usepackage{amssymb}  
\usepackage{mathtools, empheq,eqparbox}  
\usepackage{subcaption} 
\usetikzlibrary{shapes.multipart} 
\usepackage{makecell,balance,multirow}
\usepackage{float} 
\usepackage[most]{tcolorbox} 
\usepackage{svg} 
\usepackage{pdfpages} 
\usepackage[pdftex, bookmarksopen=true,plainpages = false, colorlinks=true, linkcolor=black, citecolor = black, urlcolor = black,filecolor=black , pagebackref=false,hypertexnames=false, plainpages=false, pdfpagelabels,unicode=true,hyperfootnotes=false ]{hyperref}
\usepackage[sort,compress]{cite} 
\usepackage{subcaption}
\usepackage{MnSymbol} 


\DeclareCaptionLabelSeparator{periodspace}{.\quad}
\captionsetup{font=footnotesize,labelsep=periodspace,singlelinecheck=false}
\captionsetup[sub]{font=footnotesize,singlelinecheck=true}

\usepackage[binary-units=true]{siunitx}
\usepackage{mdframed}

\usepackage{pifont}
\newcommand*\colourcheck[1]{%
  \expandafter\newcommand\csname #1check\endcsname{\textcolor{#1}{\ding{52}}}%
}
\colourcheck{blue}
\colourcheck{green}
\colourcheck{red}

\newcommand*\colourcross[1]{%
  \expandafter\newcommand\csname #1cross\endcsname{\textcolor{#1}{\ding{55}}}%
}
\colourcross{blue}
\colourcross{green}
\colourcross{red}

\usepackage[dvipsnames]{xcolor}

\title{\LARGE \bf
Robust MADER: Decentralized Multiagent Trajectory Planner Robust to Communication Delay in Dynamic Environments
}
 
\newcommand{\trajOpt}{traj\textsubscript{opt}}
\newcommand{\trajComm}{traj\textsubscript{comm}}

\newcommand{\trajAOpt}{traj\textsubscript{A\textsubscript{opt}}}
\newcommand{\trajAComm}{traj\textsubscript{A\textsubscript{comm}}}

\newcommand{\trajBOpt}{traj\textsubscript{B\textsubscript{opt}}}
\newcommand{\trajBComm}{traj\textsubscript{B}\textsubscript{comm}}

\newcommand{\trajJOpt}{traj\textsubscript{J\textsubscript{opt}}}
\newcommand{\trajJComm}{traj\textsubscript{J}\textsubscript{comm}}

\newcommand{\delayParameter}{\ensuremath{\delta_\text{DC}}}
\newcommand{\delayActual}{\ensuremath{\delta_\text{actual}}}
\newcommand{\delayActualMax}{\ensuremath{\delta_\text{max}}}
\newcommand{\delayIntroduced}{\ensuremath{\delta_\text{introd}}}

\newcommand{\NeccessaryCond}{\ensuremath{\delayActualMax{}\le\delayParameter{}}}

\newcommand{\NoRed}{\textbf{\textcolor{red}{No}}}
\newcommand{\YesGreen}{\textbf{\textcolor{ForestGreen}{Yes}}}

\definecolor{opt_color}{RGB}{203,255,182}
\definecolor{check_color}{RGB}{195,218,255}
\definecolor{recheck_color}{RGB}{255,176,176}
\definecolor{delaycheck_color}{RGB}{255,228,181}

\newcommand\mybox[2][]{\tikz[overlay]\node[fill=blue!20,inner sep=0.6pt, anchor=text, rectangle, rounded corners=0mm,#1] {#2};\phantom{#2}}

\newcommand{\OptimizationStep}{\mybox[fill=opt_color]{Optimization}}
\newcommand{\CheckStep}{\mybox[fill=check_color]{Check}}

\newcommand{\DelayCheckStep}{\mybox[fill=delaycheck_color]{Delay Check}}

\newcommand{\OStep}{\mybox[fill=opt_color]{O}}
\newcommand{\CStep}{\mybox[fill=check_color]{C}}

\newcommand{\DCStep}{\mybox[fill=delaycheck_color]{DC}} 

\newcommand{\DeltaO}{\mybox[fill=opt_color]{$\delta_{\text{O}}$}}
\newcommand{\DeltaC}{\mybox[fill=check_color]{$\delta_{\text{C}}$}}

\newcommand{\DeltaDC}{\mybox[fill=delaycheck_color]{$\delta_{\text{DC}}$}}

\newcommand{\DeltaOA}{\mybox[fill=opt_color]{$\delta_{\text{O}_{A}}$}}
\newcommand{\DeltaCA}{\mybox[fill=check_color]{$\delta_{\text{C}_{A}}$}}

\newcommand{\DeltaDCA}{\mybox[fill=delaycheck_color]{$\delta_{\text{DC}_{A}}$}}

\newcommand{\DeltaOB}{\mybox[fill=opt_color]{$\delta_{\text{O}_{B}}$}}
\newcommand{\DeltaCB}{\mybox[fill=check_color]{$\delta_{\text{C}_{B}}$}}

\newcommand{\DeltaDCB}{\mybox[fill=delaycheck_color]{$\delta_{\text{DC}_{B}}$}}

\newcommand{\OStepA}{\mybox[fill=opt_color]{O\textsubscript{A}}}
\newcommand{\CStepA}{\mybox[fill=check_color]{C\textsubscript{A}}}

\newcommand{\DCStepA}{\mybox[fill=delaycheck_color]{DC\textsubscript{A}}} 

\newcommand{\OStepB}{\mybox[fill=opt_color]{O\textsubscript{B}}}
\newcommand{\CStepB}{\mybox[fill=check_color]{C\textsubscript{B}}}

\newcommand{\DCStepB}{\mybox[fill=delaycheck_color]{DC\textsubscript{B}}}

\newcommand{\tApub}{t\textsubscript{A}\textsuperscript{pub}}
\newcommand{\tBrec}{t\textsubscript{B}\textsuperscript{rec}}

\definecolor{agent1_color}{RGB}{234,153,153}
\definecolor{agent2_color}{RGB}{151,104,175}
\definecolor{agent3_color}{RGB}{249,203,156}
\definecolor{agent4_color}{RGB}{194,115,156}
\definecolor{agent5_color}{RGB}{159,197,232}
\definecolor{agent6_color}{RGB}{117,186,117}
\definecolor{obstacle_color}{RGB}{160,117,109}






\newcommand{\MADER}{\textbf{MADER}}
\newcommand{\WOCHECKRMADER}{\textbf{RMADER w/o Check}}
\newcommand{\RMADER}{\textbf{RMADER}}
\newcommand{\EGOswarm}{\textbf{EGO-Swarm}}
\newcommand{\EDGteam}{\textbf{EDG-Team}}

\usepackage{scalerel}
\usepackage{tikz}
\usetikzlibrary{svg.path}

\definecolor{orcidlogocol}{HTML}{A6CE39}
\tikzset{
    orcidlogo/.pic={
        \fill[orcidlogocol] svg{M256,128c0,70.7-57.3,128-128,128C57.3,256,0,198.7,0,128C0,57.3,57.3,0,128,0C198.7,0,256,57.3,256,128z};
        \fill[white] svg{M86.3,186.2H70.9V79.1h15.4v48.4V186.2z}
        svg{M108.9,79.1h41.6c39.6,0,57,28.3,57,53.6c0,27.5-21.5,53.6-56.8,53.6h-41.8V79.1z M124.3,172.4h24.5c34.9,0,42.9-26.5,42.9-39.7c0-21.5-13.7-39.7-43.7-39.7h-23.7V172.4z}
        svg{M88.7,56.8c0,5.5-4.5,10.1-10.1,10.1c-5.6,0-10.1-4.6-10.1-10.1c0-5.6,4.5-10.1,10.1-10.1C84.2,46.7,88.7,51.3,88.7,56.8z};
    }
}

\newcommand\orcidicon[1]{\href{https://orcid.org/#1}{\mbox{\scalerel*{
                \begin{tikzpicture}[yscale=-1,transform shape]
                \pic{orcidlogo};
                \end{tikzpicture}
            }{|}}}}

\begin{document}
\author{Kota Kondo\textsuperscript{\orcidicon{0000-0002-2356-1359}}, Reinaldo Figueroa, Juan Rached, Jesus Tordesillas\textsuperscript{\orcidicon{0000-0001-6848-4070}}, Parker C. Lusk\textsuperscript{\orcidicon{0000-0002-9755-1276}}, and Jonathan P. How\textsuperscript{\orcidicon{0000-0001-8576-1930}}
\thanks{Manuscript received: June 26, 2023; Revised: October 8, 2023; Accepted: November 26, 2023}%
\thanks{This paper was recommended for publication by Editor M. Ani Hsieh upon evaluation of the Associate Editor and Reviewers’ comments.}
\thanks{The authors are with the Aerospace Controls Laboratory, MIT, 77 Massachusetts Ave, Cambridge, MA, USA
{\tt \{kkondo, reyfp, jrached, jtorde, plusk, jhow\}@mit.edu}}
\thanks{This work is supported by Boeing Research \& Technology and the Air Force Office of Scientific Research MURI FA9550-19-1-0386.}
\thanks{Digital Object Identifier (DOI): see top of this page.}}

\markboth{IEEE ROBOTICS AND AUTOMATION LETTERS. PREPRINT VERSION. ACCEPTED NOVEMBER, 2023}
{Kondo \MakeLowercase{\textit{et al.}}: ROBUST MADER: DECENTRALIZED MULTIAGENT TRAJECTORY PLANNER ROBUST TO COMMUNICATION DELAY} 

\maketitle

\begin{abstract}
Communication delays can be catastrophic for multiagent systems. However, most existing state-of-the-art multiagent trajectory planners assume perfect communication and therefore lack a strategy to rectify this issue in real-world environments.
To address this challenge, we propose Robust MADER (RMADER), a decentralized, asynchronous multiagent trajectory planner robust to communication delay. 
RMADER ensures safety by introducing (1) a Delay Check step, (2) a two-step trajectory publication scheme, and (3) a novel trajectory-storing-and-checking approach.
Our primary contributions include: proving recursive feasibility\footnote{We define recursive feasibility as the collision-free guarantee of generating safe trajectories throughout a series of trajectory re-planning.} for collision-free trajectory generation in asynchronous decentralized trajectory-sharing, simulation benchmark studies, and hardware experiments with different network topologies and dynamic obstacles.
We show that RMADER outperforms existing approaches by achieving a 100\% success rate of collision-free trajectory generation, whereas the next best asynchronous decentralized method only achieves 83\% success.
\end{abstract}





\section*{Supplementary Material}
\noindent\textbf{Video}: \href{https://youtu.be/i1d8di2Nrbs}{https://youtu.be/i1d8di2Nrbs} \\
\textbf{Code}: \href{https://github.com/mit-acl/rmader}{https://github.com/mit-acl/rmader}

\section{INTRODUCTION}\label{sec:intro}
The field of multiagent UAV trajectory planning has received substantial attention due to its extensive array of applications~\cite{peng2022obstacle, ryou_cooperative_2022, vinod_safe_2022, kuwata_cooperative_2011, van2017distributed, firoozi2020distributed, luis2020online, gao2022meeting, sebetghadam2022distributed, batra_decentralized_2022, wang2022robust}.
For a trajectory planner to be effectively deployed in real-world scenarios, resilience to both communication delays and dynamic environments is crucial. Yet, the simultaneous attainment of robustness against these two factors remains unexplored in existing studies.

Multiagent trajectory planners can be centralized~\cite{park_efficient_2020, sharon_conflict-based_2015, robinson_efficient_2018} or decentralized~\cite{tordesillas_mader_2022, zhou_ego-swarm_2020, lusk_distributed_2020}.
While centralized methods involve a single machine that plans every agent's trajectory, decentralized approaches enable each agent to plan its trajectory independently.
The latter showcase enhanced scalability and offer robustness against the potential failure of the centralized machine.

It is also worth noting that there are two layers of decentralization---decentralized planning and decentralized communication architecture.
Even if the planning algorithm is decentralized, agents may still require a centralized communication architecture.

While some decentralized methodologies~\cite{wang2017safety,zhou2017fast, fan2020distributed, semnani2020multiagent, chen2017decentralized, chen2021guaranteed} employ state-sensing techniques and forego the need for inter-robot communication, other strategies~\cite{chen_decoupled_2015, liu_towards_2018, lusk_distributed_2020, park_online_2022, toumieh_decentralized_2022, hou_enhanced_2022, cap_asynchronous_2013, tordesillas_mader_2022, zhou_ego-swarm_2020, senbaslar_asynchronous_2022} necessitate inter-agent communication. The non-communicating, state-sensing methods, being free from communication-associated challenges, formulate paths based on the past and present state of other agents, which can sometimes lead to less optimal planning.

\begin{figure}[!t]
    \centering
    \includegraphics[width=\columnwidth]{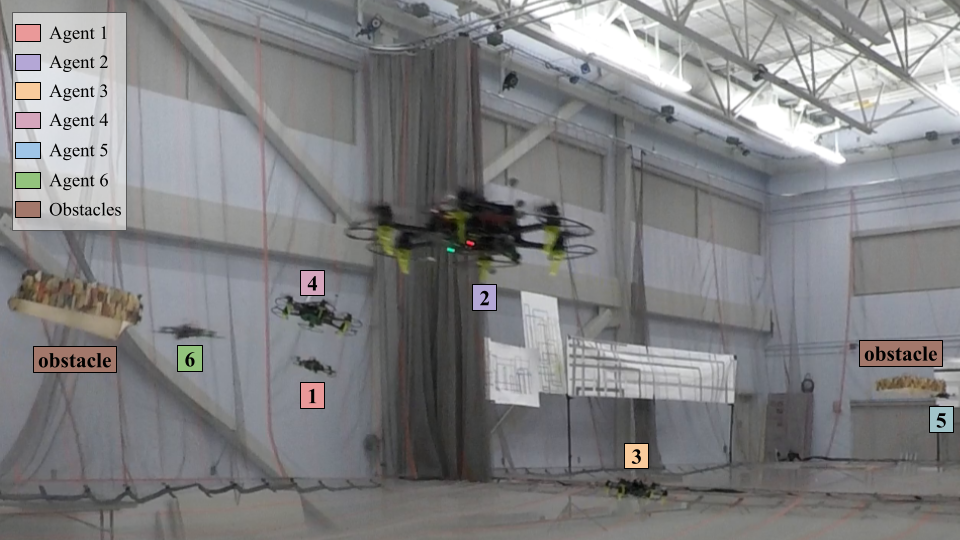}\\
    \caption{
    Hardware demonstration of RMADER with 6 UAVs and 2 dynamic obstacles (UAVs shrouded with brick-pattern foam).
    Agents communicate over a mesh network that exhibits an average delay of 49.8 ms (483 ms max).
    Despite these communication delays, RMADER enables decentralized, asynchronous onboard planning resulting in UAVs flying with a maximum speed of 5.8 m/s.}
    \vspace{-1em}
    \label{fig:mader_hw_6_uavs}
    \vspace{-1em}
\end{figure}

Contrarily, methods requiring communication enable the sharing of future trajectories, improving the efficiency of the overall team's path planning.
While a considerable number of state-sensing, non-communicating methods have been suggested for ground robots~\cite{chen2017decentralized, fan2020distributed, wang2017safety}, their application to UAVs has been limited.
This is mainly due to the increased complexity of 3D state-sensing, prompting researchers to focus primarily on communication-dependent multiagent trajectory planners for UAVs.

However, communication delays can potentially lead to trajectory deconfliction failures~\cite{gielis_critical_2022}.
Given this, the focus of this article is on enhancing the robustness of communication-requisite methods, which inherently exhibit greater efficiency compared to non-communicating strategies.

Multiagent planners can also be classified according to whether or not they are asynchronous.
Asynchronous planning enables each agent to independently trigger the planning step without considering the planning status of other agents.
In contrast to synchronous planners, which require all agents to wait at a so-called synchronization barrier until planning can be globally triggered, asynchronous methods tend to be more scalable.
They are, however, also more susceptible to communication delays since agents plan and execute trajectories independently.
See Table~\ref{tab:multiagent_category} for a summary of the aforementioned multiagent trajectory planner categorization.
\begin{table}[h]
    \renewcommand{\arraystretch}{1}
    \scriptsize
    \begin{centering}
    \caption{\centering Multiagent Trajectory Planner Category}
    \label{tab:multiagent_category}
    \begin{tabular}{>{\centering\arraybackslash}m{0.05\columnwidth} >{\centering\arraybackslash}m{0.15\columnwidth} || >{\centering\arraybackslash}m{0.3\columnwidth} >{\centering\arraybackslash}m{0.3\columnwidth}}
    & & \tikzmark{a} & \tikzmark{b} \tabularnewline
    & & \textbf{Synchronous} & \textbf{Asynchronous} \tabularnewline
    \hline 
    \hline 
    \tikzmark{c} & \textbf{Centralized} & \cite{park_efficient_2020, sharon_conflict-based_2015, robinson_efficient_2018} & Not Possible \tabularnewline
    \cline{2-4}
    \tikzmark{d} & \textbf{Decentralized} & \cite{chen_decoupled_2015, liu_towards_2018, park_online_2022, toumieh_decentralized_2022} & \cite{cap_asynchronous_2013, tordesillas_mader_2022, zhou_ego-swarm_2020, senbaslar_asynchronous_2022}, and \textbf{RMADER} \tabularnewline
    \end{tabular}
    \par\end{centering}
    \begin{tikzpicture}[overlay, remember picture, shorten >=.5pt, shorten <=.5pt, transform canvas={yshift=.25\baselineskip}]
        \draw[thick, -stealth] ({pic cs:a}) -- ({pic cs:b}) node[midway, fill=white, text opacity=1] {\scriptsize Scalability};
        \draw[thick, -stealth] ({pic cs:c}) -- ({pic cs:d}) node[midway, fill=white, fill opacity=0.6, text opacity=1] {\scriptsize Scalability};
    \end{tikzpicture}
\vspace{-1em}
\end{table}

As summarized in Table~\ref{tab:state_of_the_art_comparison}, many state-of-the-art decentralized trajectory planners for UAVs do not consider communication delays or explicitly state their assumptions. 
\begin{table}[h]
    \renewcommand{\arraystretch}{1.1}
    \scriptsize
    \begin{centering}
    \caption{\centering State-of-the-art Decentralized Multiagent Planners for UAVs}
    \label{tab:state_of_the_art_comparison}
    \resizebox{1.0\columnwidth}{!}{
    \begin{tabular}{>{\centering\arraybackslash}m{0.2\columnwidth} >{\centering\arraybackslash}m{0.08\columnwidth} >{\centering\arraybackslash}m{0.1\columnwidth} >{\centering\arraybackslash}m{0.1\columnwidth} >{\centering\arraybackslash}m{0.1\columnwidth} >{\centering\arraybackslash}m{0.1\columnwidth} >{\centering\arraybackslash}m{0.1\columnwidth}}
    \toprule 
    \textbf{Method} & \rotatebox{55}{\textbf{Async?}} & \rotatebox{55}{\textbf{Robust to comm. delay?}} & \rotatebox{55}{\textbf{Handle comm. drops?}} & \rotatebox{55}{\textbf{\makecell{Need bounds of comm. delay?}}} & \rotatebox{55}{\textbf{Hardware in dynamic env.?}} & \rotatebox{55}{\textbf{Mesh network flights}} \tabularnewline
    \midrule 
    \textbf{SCP}~\cite{chen_decoupled_2015}  & \multirow{3}[1]{*}{\NoRed{}} & \multirow{3}[1]{*}{\NoRed{}} & \multirow{3}[1]{*}{\NoRed{}} & \multirow{3}[1]{*}{\NoRed{}} & \multirow{3}[1]{*}{\NoRed{}} & \multirow{3}[1]{*}{\NoRed{}} \tabularnewline
    \cline{0-0}
    \textbf{decNS}~\cite{liu_towards_2018} &&&& \tabularnewline
    \cline{0-0}
    \textbf{LSC}~\cite{park_online_2022} &&&& \tabularnewline
    \hline 
    \textbf{decMPC}~\cite{toumieh_decentralized_2022} & \NoRed{} & \YesGreen{} & \NoRed{} & \NoRed{} & \NoRed{} & \NoRed{} \tabularnewline
    \hline
    \textbf{EDG-Team}~\cite{hou_enhanced_2022} & \YesGreen{}/\NoRed{}\footnotemark[5] & \NoRed{} & \NoRed{} & \NoRed{} & \NoRed{} & \NoRed{} \tabularnewline
    \hline 
    \textbf{ADPP}~\cite{cap_asynchronous_2013} & \YesGreen{}\footnotemark[6] & \NoRed{} & \NoRed{} & \NoRed{} & \NoRed{} & \NoRed{} \tabularnewline
    \hline
    \textbf{MADER}~\cite{tordesillas_mader_2022} & \YesGreen{} & \NoRed{} & \NoRed{} & \NoRed{} & \NoRed{} & \NoRed{} \tabularnewline
    \hline 
    \textbf{EGO-Swarm}~\cite{zhou_ego-swarm_2020} & \YesGreen{} & \NoRed{} & \NoRed{} & \NoRed{} & \NoRed{} & \NoRed{} \tabularnewline
    \hline 
    \textbf{AsyncBVC}~\cite{senbaslar_asynchronous_2022} & \YesGreen{} & \YesGreen{}  & \YesGreen{} & \YesGreen{} & \NoRed{} & \NoRed{} \tabularnewline
    \hline
    \textbf{RMADER}\ (proposed) & \YesGreen{} & \YesGreen{} & \NoRed{} & \NoRed{} & \YesGreen{} & \YesGreen{} \tabularnewline
    \bottomrule
    \end{tabular}}
    \par\end{centering}
\vspace*{0.5em}
\footnotesize{$^5$ \!\!\! EDG-Team triggers joint optimization in dense environments and switches to a centralized, synchronous planner.} \\
\footnotesize{$^6$  \!\!\!\! Asynchronous but requires priority information for planning.}
\vspace{-1em}
\end{table}

For example, \textbf{SCP}~\cite{chen_decoupled_2015}, \textbf{decNS}~\cite{liu_towards_2018}, and \textbf{LSC}~\cite{park_online_2022} are decentralized and synchronous. 
However, SCP and decNS make implicit assumptions, and LSC explicitly presumes a perfect communication environment.

The algorithm \textbf{decMPC}~\cite{toumieh_decentralized_2022} is decentralized, but it requires synchronicity and communication delays to be within a fixed planning period.
\textbf{EDG-Team}~\cite{hou_enhanced_2022} is a decentralized semi-asynchronous planner, which solves joint optimization as a group. 
EDG-Team cooperatively tackles the  problem but implicitly assumes no communication delays. 
\textbf{ADPP}~\cite{cap_asynchronous_2013} is asynchronous\footnote{As in \cite{tordesillas_mader_2022}, we define asynchronous planning to be when the agent triggers trajectory planning independently without considering the planning status of other agents. 
However, ADPP~\cite{cap_asynchronous_2013} implements a prioritized asynchronous approach, meaning plannings are not fully independently triggered.} and decentralized, but it assumes perfect communication without delay. 
Our previous work \MADER{}~\cite{tordesillas_mader_2022} is asynchronous and decentralized but assumes no communication delays.
\textbf{EGO-Swarm}~\cite{zhou_ego-swarm_2020} also proposes a decentralized, asynchronous planner that requires agents to periodically broadcast a trajectory at a fixed frequency, and each agent immediately performs collision checks upon receiving the message. EGO-Swarm is the first fully decentralized, asynchronous trajectory planner successfully demonstrating hardware experiments, yet it still suffers from collisions due to communication delays, as shown in Section~\ref{sec:sim}. 
\textbf{AsyncBVC}~\cite{senbaslar_asynchronous_2022} proposes an asynchronous decentralized trajectory planner that can guarantee safety even with communication delays and drops, and this algorithm does not require the knowledge of the bound of delays.  
However, its future trajectories are constrained by past separating planes, which can overconstrain the solution space and hence increase conservatism.
Further, it relies on discretization when solving the optimization problem, meaning that safety is only guaranteed on the discretization points. 
In contrast, our approach is able to guarantee safety in a continuous approach by leveraging the MINVO basis~\cite{tordesillas_minvo_2022}.

Additionally, to achieve reliable real-world deployment, which involves not only static obstacles, but also dynamic obstacles, it is crucial to achieve robustness in dynamic environments. 
However, as seen in Table~\ref{tab:state_of_the_art_comparison}, hardware demonstrations using decentralized communication architecture (mesh network) in dynamic environments have not been tested in the literature.
For clarification, we define a dynamic environment as an environment with dynamic obstacles.
The difference between an agent and an obstacle is that an agent can make decisions based on given information.
An obstacle, on the other hand, simply follows a pre-determined trajectory regardless of what else is in the environment.

To address robustness to communication delays in dynamic environments, we propose \textbf{Robust MADER} (\RMADER{}), a decentralized, asynchronous multiagent trajectory planner capable of generating collision-free\footnote{Even if two time-parameterized trajectories coincide spatially, if the corresponding robots are not located at the same location at any given time, they would still be considered collision-free trajectories.} trajectories in dynamic environments even with communication delays. 
Note that RMADER requires a bound on the maximum communication delay to ensure safety and does not handle complete communication drops.
As shown in Table~\ref{tab:state_of_the_art_comparison}, RMADER is the first approach to demonstrate trajectory planning in dynamic environments while maintaining robustness to communication delays.
It is also worth noting that RMADER's trajectory deconfliction approach, consisting of (1) a Delay Check, (2) two-step trajectory sharing, and (3) a trajectory-storing-and-checking mechanism (detailed in Section~\ref{sec:trajectory-deconfliction}), is adaptable to any trajectory optimization algorithm that involves the sharing of trajectories and performs collision avoidance.
This work's contributions include the followings:
\begin{enumerate}
  \item An algorithm that guarantees collision-free trajectory generation even in the presence of real-world communication delays between vehicles. 
  \item Extensive simulations comparing our approach to state-of-the-art methods under communication delays that demonstrate a \textbf{100\% success rate} of collision-free trajectory generation (see Table~\ref{tab:sim_compare}).
  \item Extensive set of decentralized hardware experiments.
\end{enumerate}
This paper extends our conference paper~\cite{kondo2022robust} by including:
\begin{enumerate}
  \item Proof of RMADER's recursive feasibility (and hence collision-free) articulated within the bound of maximum communication delays, as detailed in Proposition~1. This provides an extensive elaboration of RMADER's trajectory deconfliction approach, including (1) a Delay Check, (2) two-step trajectory sharing, and (3) a novel trajectory-storing-and-checking mechanism.
  \item Comparisons with more baseline algorithms, including MADER~\cite{tordesillas_mader_2022}, EGO-Swarm~\cite{zhou_ego-swarm_2020}, EDG-Team~\cite{hou_enhanced_2022}. RMADER is shown to achieve 100\% collision avoidance and outperforms those baseline approaches.
  \item Extensive simulation studies in realistic environments with dynamic obstacles and other agents.
  \item Introduction and comparison with a new variation of RMADER (RMADER without Check) that was created to evaluate the utility of the Check step included in the RMADER trajectory deconfliction scheme.
  \item Additional hardware flight experiments using a mesh (decentralized) network with two, four, and six agents, two dynamic obstacles, and a top speed of 5.8m/s, demonstrating RMADER in a fully decentralized communication architecture. 
  This significantly extends the previous conference paper experiments~\cite{kondo2022robust} that used a centralized communication network without obstacles and vehicles restricted to slower speeds of 3.4m/s.
\end{enumerate}
The results of this paper thus show the first experimental demonstration of a multiagent trajectory planner that is resilient to communication delays and capable of handling multiple dynamic obstacles in hardware.
Note that RMADER focuses primarily on addressing communication delays in dynamic environments, and we do not address complete communication drops as described in Table~\ref{tab:multiagent_category}.



\section{Trajectory Deconfliction}\label{sec:trajectory-deconfliction}

RMADER agents plan trajectories asynchronously and broadcast the results to each other.
Each agent uses these trajectories as constraints in its optimization problem.
Assuming perfect communication with no delay, safety can be guaranteed using our previous approach presented in MADER~\cite{tordesillas_mader_2022}. 
However, this guarantee breaks down when an agent's planned trajectory is received by other agents with latency. Section~\ref{subsec:rmader_deconfliction} shows how RMADER guarantees safety even with communication delays.
We use the definitions shown in Table~\ref{tab:delaydefinitions}.

\begin{table}[h]
    \begin{centering}
    \caption{Definitions of the different delay quantities: Note that, by definition, $0\le\delayIntroduced\le\delayActual{}\le\delayActualMax{}$. See also Figs.~\ref{fig:comm_delay_in_simulation} and \ref{fig:comm_delay_on_mesh} for the actual histogram of the delays in simulation and hardware experiments.}
    \renewcommand{\arraystretch}{1.3}
    \begin{centering}
    \begin{tabular}{>{\centering\arraybackslash}m{0.1\columnwidth} >{\arraybackslash}m{0.7\columnwidth}}
    \toprule 
    \delayActual{} & Actual communication delays among agents. \tabularnewline
    \hline 
    \delayActualMax{} & \makecell[l]{Possible maximum communication delay. } \tabularnewline
    \hline 
    \delayIntroduced{} & \makecell[l]{Introduced communication delay in simulations. } \tabularnewline
    \hline 
    \delayParameter{} & \makecell[l]{Length of \DelayCheckStep{} in RMADER. \\ To guarantee safety, $\delayActualMax{}\le\delayParameter{}$ must be satisfied.} \tabularnewline
    \bottomrule
    \end{tabular}
    \par\end{centering}
    \label{tab:delaydefinitions}
    \par\end{centering}
    \vspace{-2em}
\end{table}

\subsection{Robust MADER Deconfliction} \label{subsec:rmader_deconfliction}

The primary challenge of trajectory deconfliction with communication delays in asynchronous trajectory sharing is maintaining recursive feasibility for each agent with respect to the dynamically changing/newly discovered environment from that agent's perspective and with respect to the newly received trajectories designed and published by all the other agents.
This situation is complicated by the asynchronous planning framework (new information can arrive at any time) and by the possible communication delay between when information is sent and received.
Thus, an agent cannot be confident that the newly optimized path does not conflict with recently published paths from other agents.
To achieve robustness to communication delays, we introduce (1) \DelayCheckStep{} (\DCStep{}), where an agent keeps receiving trajectories from other agents, and repeatedly checks if its newly optimized trajectory conflicts with other agents' trajectories, (2) a two-step trajectory publication scheme involving \trajOpt{} and \trajComm{} broadcasts, and (3) a trajectory-storing-and-checking mechanism.

We first explain RMADER's (1) \DCStep{} and (2) two-step trajectory publication approach by going through the pseudocode of RMADER deconfliction given in Algorithm~\ref{alg:rmader}. 
First, Agent~B runs \OptimizationStep{} (\OStep{}) to obtain \trajBOpt{} and broadcasts it if \CheckStep{} (\CStep{}) is satisfied (Line~\ref{line:broadcast_traj_B_new}).
For clarity, we use the notations \OStep{}, \CStep{}, and \DCStep{} to represent the algorithm subroutines, and \DeltaO{}, \DeltaC{}, and \DeltaDC{} to indicate the time each subroutine takes to execute, respectively.

This \CStep{} aims to determine if \trajBOpt{} has any conflicts with trajectories received in \DeltaO{}. 
Then, Agent B commits to either \trajBComm{} or \trajBOpt{} based on the following rule: if \DCStep{} detects conflicts, Agent~B commits to \trajBComm{} (Line~\ref{line:DC_not_satisfied}), and if \DCStep{} detects no conflicts, \trajBOpt{} (Line~\ref{line:DC_satisfied}).
This committed trajectory is then broadcast to the other agents (Line~\ref{line:broadcast_traj_B}). 
Fig.~\ref{fig:rmader_deconfliction} shows how RMADER deals with communication delays, where Agent A's \OStep/\CStep/\DCStep{} are \OStepA/\CStepA/\DCStepA, respectively, and Agent B's are \OStepB/\CStepB/\DCStepB. 
Similarly, Agent A's \DeltaO{}/\DeltaC{}/\DeltaDC{} are \DeltaOA{}/\DeltaCA{}/\DeltaDCA{}, and Agent B's are \DeltaOB{}/\DeltaCB{}/\DeltaDCB{}.

Note that \DeltaDC{} is either less than or equal to a predetermined parameter \delayParameter{} ($\geq$ \delayActualMax{}), and \DCStep{} operates as follows:
\begin{itemize}
    \item If a potential collision is found, \DCStep{} terminates. In this case, \DeltaDC{} is less than \delayParameter{}. The agent, which is currently following the previous trajectory that is guaranteed to be collision-free, will discard the recently optimized trajectory and initiate a new \OStep{}. In other words, in case of early termination of \DCStep{}, no new trajectory is committed, and therefore, safety is guaranteed.
    \item Otherwise, \DCStep{} terminates after \delayParameter{}, and the agent commits to the newly optimized trajectory. This newly optimized trajectory is guaranteed to be safe since no possible collisions are detected during \DeltaDC{}.
\end{itemize}
\DCStep{} is designed to detect potential collisions at the earliest opportunity. 
Rapid detection and new trajectory generation are crucial for ensuring fast and safe replanning during real-time flights.
Note that the re-planned trajectory is smooth and ensures third-order parametric continuity during the transition from the old to the new trajectory.
The initial position, velocity, and acceleration of the new trajectory are constrained to be smooth and continuous with respect to the previous trajectory.

\begin{figure}[h]
  \centering
  \begin{centering}      
  \resizebox{0.9\columnwidth}{!}{%
       \begin{tikzpicture}
       [
        greenbox/.style={shape=rectangle, fill=opt_color, draw=black},
        bluebox/.style={shape=rectangle, fill=check_color, draw=black},
         yellowbox/.style={shape=rectangle, fill=delaycheck_color, draw=black},
        ]
        
        \newcommand\Ay{2.5}
        \newcommand\Axo{1}
        \newcommand\Axc{3}
        \newcommand\Axr{4}
        \newcommand\Axe{5.5}
        
        \newcommand\By{0.7}
        \newcommand\Bxo{2.0}
        \newcommand\Bxc{4.25}
        \newcommand\Bxr{5.1}
        \newcommand\Bxe{6.6}
        
            \node[text=red] at (0.5,\Ay+0.2) {\scriptsize Agent A};
            \filldraw[fill=delaycheck_color, draw=black, opacity=0.2] (0,\Ay) rectangle (\Axo,\Ay-0.3);
            \filldraw[thick, fill=opt_color, draw=black] (\Axo,\Ay) rectangle (\Axc,\Ay-0.3);
            \filldraw[thick, fill=check_color, draw=black] (\Axc, \Ay) rectangle (\Axr, \Ay-0.3);
            \filldraw[thick, fill=delaycheck_color, draw=black] (\Axr, \Ay) rectangle (\Axe, \Ay-0.3);
            \filldraw[fill=opt_color, draw=black, opacity=0.2] (\Axe, \Ay) rectangle (\Axe+1.5, \Ay-0.3);
            \filldraw[fill=check_color, draw=black, opacity=0.2] (\Axe+1.5, \Ay) rectangle (\columnwidth, \Ay-0.3);
            \node[text=blue] at (0.5,\By+0.2) {\scriptsize Agent B};
            \filldraw[fill=check_color, draw=black, opacity=0.2] (0,\By) rectangle (\Bxo-1.5,\By-0.3);
            \filldraw[fill=delaycheck_color, draw=black, opacity=0.2] (\Bxo-1.5,\By) rectangle (\Bxo,\By-0.3);
            \filldraw[thick, fill=opt_color, draw=black] (\Bxo,\By) rectangle (\Bxc,\By-0.3);
            \filldraw[thick, fill=check_color, draw=black] (\Bxc, \By) rectangle (\Bxr, \By-0.3);
            \filldraw[thick, fill=delaycheck_color, draw=black] (\Bxr, \By) rectangle (\Bxe, \By-0.3);
            \filldraw[fill=opt_color, draw=black, opacity=0.2] (\Bxe, \By) rectangle (\columnwidth, \By-0.3);
        
        \draw[thick, densely dotted] (\Axr,-0.6) -- (\Axr,\Ay-0.3) node[] at (\Axr, -0.85) {\tiny t\textsubscript{traj\textsubscript{A\textsubscript{opt}}}};
        \draw[thick, densely dotted] (\Axe,-0.6) -- (\Axe,\Ay-0.3) node[] at (\Axe, -0.85) {\tiny t\textsubscript{traj\textsubscript{A\textsubscript{comm}}}};
            
        \draw[thick,->] (0,-0.6) -- (\columnwidth,-0.6) node[anchor=north east] {time};
        
        \draw[thick, ->, draw=red] (\Axr,\Ay-0.3) -- (\Axr,\Ay-1.2) node[midway,fill=white, text=red] {\tiny \trajAOpt{}};
        \draw[thick, ->, draw=red] (\Axe,\Ay-0.3) -- (\Axe,\Ay-1.2) node[midway,fill=white, text=red] {\tiny \trajAComm{}};
        \draw[thick, <-, draw=red] (\Bxc-0.15,\By) -- (\Bxc-0.15,\By+0.3)  node[anchor=south, fill=white, fill opacity=0.8, text opacity=1, text=black] {\tiny case 1};
        \draw[thick, <-, draw=red] (\Bxc+0.35,\By) -- (\Bxc+0.35,\By+0.3) node[anchor=south,text=black] {\tiny case 2};
        \draw[thick, <-, draw=red] (\Bxr+0.1,\By) -- (\Bxr+0.1,\By+0.3) node[anchor=south,text=black] {\tiny case 3};
        \draw[thick, <-, draw=red] (\Bxe+0.4,\By) -- (\Bxe+0.4,\By+0.3) node[anchor=south,text=black] {\tiny \makecell{case 4 \\ (not occur)}};
        \draw[thick, <->, draw=black] (\Axe,-0.2) -- (\columnwidth,-0.2) node[midway, anchor=south, text=black] {\tiny \textbf{\textcolor{red}{\trajAOpt{}} will not arrive after \DCStepA{}}};
        
        \node[font=\bfseries,right] at (\Axo,\Ay-0.15) {\tiny O\textsubscript{A}};
        \node[font=\bfseries,right] at (\Axc,\Ay-0.15) {\tiny C\textsubscript{A}};
        \node[font=\bfseries,right] at (\Axr,\Ay-0.15) {\tiny DC\textsubscript{A}};

        \node[font=\bfseries,right] at (\Bxo,\By-0.15) {\tiny O\textsubscript{B}};
        \node[font=\bfseries,right] at (\Bxc,\By-0.15) {\tiny C\textsubscript{B}};
        \node[font=\bfseries,right] at (\Bxr,\By-0.15) {\tiny DC\textsubscript{B}};
        
        
        \node[color=gray] at (0.5,\Ay-0.15) {\scriptsize Prev. iter.};
        \node[color=gray] at (0.95\columnwidth,\Ay-0.15) {\scriptsize Next iter.};
        \node[color=gray] at (0.5,\By-0.15) {\scriptsize Prev. iter.};
        \node[color=gray] at (0.95\columnwidth,\By-0.15) {\scriptsize Next iter.};
        
    \end{tikzpicture}
  }
    
  \captionof{figure}{RMADER deconfliction: After \CStepA{}, Agent A keeps executing \trajAComm{}, while checking potential collisions of newly optimized trajectory, \trajAOpt{}. This is because \trajAOpt{} might have conflicts due to communication delays and need to be checked in \DeltaDCA{}, and \trajAComm{} is ensured to be collision-free. If collisions are detected in either \DeltaCA{} or \DeltaDCA{}, Agent A keeps executing \trajAComm{}. If \DCStepA{} does not detect collisions, Agent A broadcasts and starts implementing \trajAOpt{} (i.e., $\text{\trajAComm{}}\leftarrow \text{\trajAOpt{}}$).
    \label{fig:rmader_deconfliction}
    }
    \end{centering}
\end{figure}

\begin{algorithm}
    \begin{algorithmic}[1]
    \Require \trajAComm{}, a feasible trajectory
    \While{not goal reached}
        \State \trajBOpt{} $=$ \textproc{\OptimizationStep{}()} \label{line:traj_opt}
        \If{\textproc{\CheckStep}(\trajBOpt{}) $==$ False} 
            \State Go to Line 2
        \EndIf
        \State Broadcast \trajBOpt{} \label{line:broadcast_traj_B_new}
        \If{\textproc{\DelayCheckStep{}}(\trajBOpt{}) $==$ False} 
            \State Keeps executing \trajBComm{} and go to Line~\ref{line:broadcast_traj_B}\label{line:DC_not_satisfied}
        \EndIf
        \State \trajBComm{} $\leftarrow$ \trajBOpt{} \label{line:DC_satisfied}
        \State Broadcast \trajBComm{} \label{line:broadcast_traj_B}
    \EndWhile
    \end{algorithmic}
    \caption{Robust MADER - Agent B}
    \label{alg:rmader}
\end{algorithm}

\newcommand{\QB}{\ensuremath{\mathcal{Q}_B}}
We now describe RMADER's trajectory-storing-and-checking approach to demonstrate that, at any given time, an agent's committed trajectories are checked against the previously committed, \trajComm{}, and newly optimized trajectories, \trajOpt{}, of other agents that have been published.

Agent B stores the trajectories received from other agents in a set \QB{}. 
Fig.~\ref{fig:QB_definition} illustrates how Agent B stores trajectories from Agent A.
Note Agent B stores Agent A's committed (and known-to-be-feasible) \trajAComm{} and the new optimized trajectory \trajAOpt{}, and Agent B uses \QB{} in \OStepB{}, \CStepB{}, and \DCStepB{} to check for collisions.
Agent B must similarly keep track of all combinations of paths from all of the nearby agents.
Furthermore, note that, throughout re-planning, \QB{} is updated as messages are received.\footnote{To be precise, due to avoiding race conditions, while we check potential collisions against trajectories stored in \QB{}, we cannot update \QB{}. However, we put the \QB{} update operation in the queue as soon as messages arrive, which results in \QB{} updates between \OStepB{}, \CStepB{}, and \DCStepB{}, as well as, throughout \DeltaDCB{} since \DCStep{} is a series of \CStep{}.}
Agent B generates an optimal trajectory, \trajBOpt{}, in \DeltaOB{}, using all the trajectories stored in \QB{}($t_{\text{opt-start}}$) as constraints, where $t_{\text{opt-start}}$ is the start time of \OStepB{}.
Subsequently, in \DeltaCB{}, Agent B checks \trajBOpt{} for potential collisions against \QB{}($t_{\text{check-start}}$) as constraints, where $t_{\text{check-start}}$ is the start time of \CStepB{}.
Finally, in \DeltaDCB{}, Agent B repeatedly checks for collisions against \QB{} while simultaneously updating it. 
This trajectory-storing-and-checking approach ensures that agents' committed trajectories are verified against other agents' previously committed and newly optimized trajectories that have already been published.

\begin{figure}[h]
        \centering
        \includegraphics[width=0.6\columnwidth]{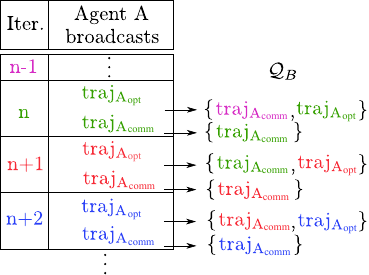}
        \caption{RMADER's trajectory storing and checking approach: When Agent B receives Agent A's committed trajectory, \trajAComm{}, Agent B clears all the previously received Agent A's trajectories in \QB{} and only stores the committed trajectories. When Agent B receives Agent A's new optimized trajectory, \trajAOpt{}, Agent B adds it to \QB{} while keeping Agent A's committed trajectory. Note that the pair of curly brackets \{\} is \QB{}, and it gets updated as Agent B receives trajectories. ~\label{fig:QB_definition}}
    \vspace{-2em}
\end{figure}

To demonstrate that RMADER's deconfliction strategy guarantees collision safety in all the possible cases of message exchange between agents, we provide RMADER's collision-free guarantee in  Proposition~\ref{prop:rmader_deconfliction_theorem}.

\newtheorem{prop}{Proposition}
\begin{prop}
\label{prop:rmader_deconfliction_theorem}
Given $\delayActualMax{}\le\delayParameter{}$, RMADER is guaranteed to be collision-free under communication delay.
\end{prop}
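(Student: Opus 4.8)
The plan is to reduce the system-level claim to a pairwise invariant and then prove that invariant by induction over the totally ordered sequence of commit events produced by all agents, which is precisely the recursive-feasibility statement (collision-freeness preserved across replanning). Since any collision is a pairwise event, it suffices to show that for every pair of agents A and B the committed trajectories they are simultaneously executing are collision-free. I would take as the induction hypothesis that all currently committed trajectories are pairwise collision-free, with the base case supplied by the feasible \trajAComm{} required as input to Algorithm~\ref{alg:rmader}, and show that no single commit event can violate it.

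Fixing a pair $(A,B)$, I would case-split on the outcome of the committing agent's \DCStep{}. If \DCStep{} terminates early (a conflict is detected), that agent keeps its previous committed trajectory and commits nothing new, so the invariant is untouched. The substantive case is when, say, B commits \trajBOpt{}. By the trajectory-storing-and-checking mechanism, passing \DCStep{} certifies \trajBOpt{} collision-free against every A-trajectory resident in \QB{} throughout \DeltaDC{}. If A's commitment is unchanged, A's feasible \trajAComm{} is in \QB{} (the storing rule retains the committed trajectory), so the check directly certifies \trajBOpt{} against what A executes. The two-step publication of Line~\ref{line:broadcast_traj_B_new} is what covers the transient in which A has just committed but its \trajAComm{} broadcast has not yet reached B: because A broadcasts its candidate \trajAOpt{} before it ever begins executing it, B is already checking against whatever A could start to execute.

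The crux --- and the step I expect to be the main obstacle --- is when A and B each commit a freshly optimized trajectory in overlapping replanning rounds, so that neither can lean on the other's stable committed trajectory. Here I would argue by a timing contradiction that rests on $\delayActualMax{}\le\delayParameter{}$. Let A's Delay Check occupy $[a_1,a_2]$ and B's occupy $[b_1,b_2]$, each of length \delayParameter{} (a committing agent runs the full \DCStep{}), with each agent broadcasting its optimized trajectory at the start of its own interval, at $a_1$ and $b_1$ respectively. Agent A fails to test \trajAOpt{} against \trajBOpt{} only if \trajBOpt{} arrives after $a_2$; since the actual delay is at most $\delayActualMax{}\le\delayParameter{}=a_2-a_1$, this forces $b_1>a_1$. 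By the symmetric argument, B fails to test only if $a_1>b_1$. As these cannot hold together, at least one agent --- say A --- retains the other's optimized trajectory for the remainder of its \DCStep{}; since A commits, that \DCStep{} reported no conflict, certifying \trajAOpt{} collision-free against the very \trajBOpt{} that B commits to.

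Two technical points would remain to be discharged. First, the asynchronous refreshing of \QB{} must be reconciled with the checks: because \DCStep{} is a running sequence of \CStep{} operations over a \QB{} updated as messages arrive (with queued updates, per the race-condition footnote), any trajectory delivered by $a_2$ is in fact tested against the full \trajAOpt{}, which is exactly what $\delayActualMax{}\le\delayParameter{}$ guarantees. Second, I would verify that the invariant survives the short transients introduced by the two-step scheme --- between an agent committing to its optimized trajectory and its peers receiving the corresponding committed broadcast --- by noting that throughout such an interval the peers are already checking against the earlier-broadcast optimized trajectory. Assembling the cases then yields pairwise, hence system-wide, collision-freeness whenever $\delayActualMax{}\le\delayParameter{}$, closing the induction.
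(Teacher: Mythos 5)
Your proof is correct, and it reaches the result by a genuinely different decomposition than the paper's. After the same pairwise reduction, the paper proceeds by exhaustive enumeration: it classifies the twelve combinations of when Agent A publishes and when Agent B receives relative to B's phases (\DeltaOB{}, \DeltaCB{}, \DeltaDCB{}, or afterwards) in Table~\ref{tab:rmader_possible_cases}, and then shows case by case in Table~\ref{tab:rmader_detection_cases} that each combination either cannot occur under $\delayActualMax{}\le\delayParameter{}$ (cases 4 and 8) or leads to detection by a named subroutine of one of the two agents. Your crux scenario is precisely the paper's case 12, which it resolves only in a footnote: \trajBOpt{}, published at the start of \DCStepB{}, must reach Agent A before \DCStepA{} ends, so Agent A performs the deconfliction. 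Your induction over commit events replaces the enumeration entirely: the early-termination and stable-peer cases absorb the paper's cases 1--11 (where Agent B itself catches any conflict) via the storing rule and the two-step publication, and your symmetric timing contradiction (that $b_1>a_1$ and $a_1>b_1$ cannot both hold) is the footnote's argument promoted to the centerpiece and made bidirectional. This buys several things: it derives, rather than postulates, the paper's guiding principle that at least one of the two agents is always aware of the other's executed trajectory; it isolates the only use of $\delayActualMax{}\le\delayParameter{}$ in a single inequality; and it makes manifest that nothing depends on the lengths of \DeltaO{} and \DeltaC{}, a fact the paper states as a separate remark and then relies on for the RMADER-without-Check variant of Section~\ref{sec:rmader-wo-check}. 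What the paper's enumeration buys instead is an implementation-facing checklist naming exactly which subroutine catches each conflict, which its figures illustrate scenario by scenario. The one shared soft spot is the race condition you flag yourself: your step ``delivered by $a_2$ implies tested'' holds only under the paper's convention that queued updates of the stored trajectory set are interleaved with the repeated \CStep{} executions inside \DCStep{}, so an arrival in the gap between the final check and $a_2$ must be closed by the implementation; your proof, like the paper's, leans on that convention rather than eliminating it.
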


\begin{proof}
    First, note that in decentralized asynchronous trajectory planning, ensuring collision safety between any two agents is a sufficient condition for guaranteeing multiagent collision safety.
    This is because decentralized asynchronous multiagent trajectory deconflictions consist of sets of two-agent trajectory deconflictions.
    This allows us to apply two-agent deconfliction analyses to multiagent deconfliction.
    Also, it is crucial to note that collision safety between two agents is guaranteed if at least one agent is aware of the other's trajectory at any given time.
    In RMADER, this condition is satisfied due to RMADER's trajectory-storing-and-checking approach and $\delayActualMax{}\le\delayParameter{}$.

    RMADER's (1) \DCStep{}, (2) two-step trajectory sharing, and (3) trajectory-storing-and-checking approach are designed to ensure safe trajectory deconfliction in an asynchronous setting. 
    To prove RMADER's collision-free guarantee, we enumerate the 12 possible cases of trajectory deconfliction between two agents.
    Note that when one agent sends its trajectory to the other, the other agent will receive it in either \DeltaO{}, \DeltaC{}, \DeltaDC{}, or in the following iterations. 
    For clarity, we define the time Agent A publishes its newly optimized trajectory as \tApub, and the time Agent B receives it as \tBrec.
    All 12 possible cases are illustrated in Table~\ref{tab:rmader_possible_cases}.
    For example, case 3 denotes that Agent A publishes in \DeltaOB{}, and Agent B receives in \DeltaDCB{}.
    Note that cases 1-4 correspond to cases 1-4 in Fig.~\ref{fig:rmader_deconfliction}, and cases 5-12 are illustrated in Figs.~\ref{fig:rmader_deconfliction-case-5-8} and \ref{fig:rmader_deconfliction-case-9-12} in Appendix.
    
    Note that the lengths of \DeltaO{} and \DeltaC{} vary with each planning iteration. 
    However, this does not affect RMADER's guarantee of being collision-free, as the proof does not make any assumptions regarding the lengths of \DeltaO{} and \DeltaC{}.
    
    \begin{table}[h]
    \caption{\centering RMADER Trajectory Deconfliction Cases}
    \label{tab:rmader_possible_cases}
    \renewcommand{\arraystretch}{1.2}
    \begin{centering}
    \resizebox{\columnwidth}{!}{
    \begin{tabular}{ c c || c c c c }
    \toprule
     & & \multicolumn{4}{c}{\tBrec}\tabularnewline
     & & \DeltaOB{} & \DeltaCB{} & \DeltaDCB{} & After \DeltaDCB{} \tabularnewline
    \midrule
    \midrule
    \multirow{3}[1]{*}{\tApub} & \DeltaOB{} & case 1 & case 2 & case 3 & case 4 \tabularnewline
     & \DeltaCB{} & case 5 & case 6 & case 7 & case 8 \tabularnewline
     & \DeltaDCB{} & case 9 & case 10 & case 11 & case 12 \tabularnewline
    \bottomrule
    \end{tabular}}
    \end{centering}
    \end{table}
    
    \begin{table}[h]
    \caption{\centering RMADER Collision Detection Cases: Either Agent A or B will detect potential collisions in all 12 possible cases.}
    \label{tab:rmader_detection_cases}
    \begin{mdframed}[linecolor=white,linewidth=0.1pt]
    \renewcommand{\arraystretch}{0.6}
    \centering
    \begin{centering}
    \resizebox{0.8\columnwidth}{!}{
    \centering
    \begin{tabular}{ c c c }
    \toprule
    Case & \makecell{Given $\delayActualMax{}\le\delayParameter{}$, \\ could this occur?} & When Agent B detects \tabularnewline
    \midrule
    1    & \YesGreen{}                                                              & \DeltaCB{} \tabularnewline
    \midrule
    2    & \YesGreen{}                                                                & \DeltaDCB{} \tabularnewline
    \midrule
    3    & \YesGreen{}                                                                & \DeltaDCB{} \tabularnewline
    \midrule
    4    & \NoRed{}                                                               & N/A \tabularnewline
    \midrule
    5    & \YesGreen{}                                                                & \OStepB{} \tabularnewline
    \midrule
    6    & \YesGreen{}                                                                & \DeltaDCB{} \tabularnewline
    \midrule
    7    & \YesGreen{}                                                                & \DeltaDCB{} \tabularnewline
    \midrule
    8    & \NoRed{}                                                               & N/A \tabularnewline
    \midrule
    9    & \YesGreen{}                                                                & \OStepB{} \tabularnewline
    \midrule
    10   & \YesGreen{}                                                                & \CStepB{} \tabularnewline
    \midrule
    11   & \YesGreen{}                                                                & \DeltaDCB{} \tabularnewline
    \midrule
    12   & \YesGreen{}                                                                & Agent A detects.\footnotemark[7]\tabularnewline
    \bottomrule
    \end{tabular}}
    \end{centering}
    \vspace*{0.5em}
    \end{mdframed}
    \footnotesize{$^7$ \!\!\! Given $\delayActualMax{}\le\delayParameter{}$, \trajBOpt{} will arrive before the end of \DCStepA{}, and therefore, Agent A can detect potential collisions before committing to its trajectory (See Fig.~\ref{fig:rmader_deconfliction-case-9-12} in Appendix). 
    In other words, Agent A now deconflicts trajectories.
    Note that this potential collision detection during \DeltaDCA{} is reliably achieved irrespective of the lengths of \DeltaOA{} and \DeltaCA{}.}
    \vspace{-1em}
    \end{table}

    Table ~\ref{tab:rmader_detection_cases} shows, in all the possible cases, either agent can detect potential collisions (see Fig.~\ref{fig:rmader_deconfliction} for cases 1-4, Fig.~\ref{fig:rmader_deconfliction-case-5-8} for cases 5-8, and Fig.~\ref{fig:rmader_deconfliction-case-9-12} for cases 9-12).
    Therefore RMADER is guaranteed to be collision-free as long as $\delayActualMax{}\le\delayParameter{}$ holds.
\end{proof}





\begin{algorithm}
    \newcommand{\algorithmicbreak}{\textbf{break}}
    \begin{algorithmic}[1] 
    \Function {\textproc{\DelayCheckStep{}}}{\trajBOpt{}}
        \For {\delayParameter{}}
            \If{\textproc{\CheckStep}(\trajBOpt{}) $==$ False} 
                \State Discard \trajBOpt{}
                \State \Return False
             \EndIf
         \EndFor
    \State \Return True
    \EndFunction
    \end{algorithmic}
    \caption{Delay Check - Agent B}\label{alg:pess_delaycheck}
\end{algorithm}


\subsection{RMADER without Check}\label{sec:rmader-wo-check}

This section introduces a new variation of RMADER and evaluates \CheckStep{} in RMADER's deconfliction.
As demonstrated in Algorithm~\ref{alg:pess_delaycheck}, \DCStep{} repeatedly executes \CStep{}, and Proposition~\ref{prop:rmader_deconfliction_theorem} have no assumptions regarding the length of \DeltaC{}, and therefore, we can guarantee the safety of RMADER even without \CStep{} as long as \NeccessaryCond{}.
We therefore remove \CStep{} and introduce a new variation of RMADER \textemdash RMADER without Check. 
Since it does not go through \CStep{}, it will replan, publish, and potentially generate trajectory faster.
However, this could increase the number of rejections of newly optimized trajectories because it is published before it is checked for potential collisions with trajectories received in \DeltaO{}, leading to more conservative results.
We introduced this variation to evaluate the need for \CStep{}, as it is not necessary to guarantee safety, and removing \CStep{} potentially achieves faster replanning. 
Note that even if we skip \CStep{} and publish \trajOpt{}, \trajOpt{} is not committed yet, and therefore, collision-free guarantee is ensured. 
Fig.~\ref{fig:wo_check_rmader_deconfliction} illustrates this scheme's trajectory deconfliction: 

\begin{figure}[h]
\centering
\begin{minipage}{.5\textwidth}
  \begin{centering}
  \resizebox{0.9\textwidth}{!}{%
       \begin{tikzpicture}
       [
        greenbox/.style={shape=rectangle, fill=opt_color, draw=black},
        bluebox/.style={shape=rectangle, fill=check_color, draw=black},
         yellowbox/.style={shape=rectangle, fill=delaycheck_color, draw=black},
        ]
        
        \newcommand\Ay{2.5}
        \newcommand\Axo{1}
        \newcommand\Axc{3}
        \newcommand\Axr{4}
        \newcommand\Axe{5.5}
        
        \newcommand\By{0.7}
        \newcommand\Bxo{2.0}
        \newcommand\Bxc{4.7}
        \newcommand\Bxr{6.0}
        \newcommand\Bxe{7.5}
        
            \node[text=red] at (0.5,\Ay+0.2) {\scriptsize Agent A};
            \filldraw[fill=delaycheck_color, draw=black, opacity=0.2] (0,\Ay) rectangle (\Axo,\Ay-0.3);
            \filldraw[thick, fill=opt_color, draw=black] (\Axo,\Ay) rectangle (\Axr,\Ay-0.3);
            \filldraw[thick, fill=delaycheck_color, draw=black] (\Axr, \Ay) rectangle (\Axe, \Ay-0.3);
            \filldraw[fill=opt_color, draw=black, opacity=0.2] (\Axe, \Ay) rectangle (\Axe+1.5, \Ay-0.3);
            \filldraw[fill=delaycheck_color, draw=black, opacity=0.2] (\Axe+1.5, \Ay) rectangle (\Axe+3, \Ay-0.3);
            \filldraw[fill=opt_color, draw=black, opacity=0.2] (\Axe+3, \Ay) rectangle (\columnwidth, \Ay-0.3);
            \node[text=blue] at (0.5,\By+0.2) {\scriptsize Agent B};
            \filldraw[fill=opt_color, draw=black, opacity=0.2] (0,\By) rectangle (\Bxo-1.5,\By-0.3);
            \filldraw[fill=delaycheck_color, draw=black, opacity=0.2] (\Bxo-1.5,\By) rectangle (\Bxo,\By-0.3);
            \filldraw[thick, fill=opt_color, draw=black] (\Bxo,\By) rectangle (\Bxc,\By-0.3);
            \filldraw[thick, fill=delaycheck_color, draw=black] (\Bxc, \By) rectangle (\Bxc+1.5, \By-0.3);
            \filldraw[fill=opt_color, draw=black, opacity=0.2] (\Bxc+1.5, \By) rectangle (\columnwidth, \By-0.3);
        
        \draw[thick, densely dotted] (\Axr,-0.6) -- (\Axr,\Ay-0.3) node[] at (\Axr, -0.85) {\tiny t\textsubscript{traj\textsubscript{A\textsubscript{opt}}}};
        \draw[thick, densely dotted] (\Axe,-0.6) -- (\Axe,\Ay-0.3) node[] at (\Axe, -0.85) {\tiny t\textsubscript{traj\textsubscript{A\textsubscript{comm}}}};
            
        \draw[thick,->] (0,-0.6) -- (\columnwidth,-0.6) node[anchor=north east] {time};
        
        \draw[thick, ->, draw=red] (\Axr,\Ay-0.3) -- (\Axr,\Ay-1.2) node[midway,fill=white, text=red] {\tiny \trajAOpt{}};
        \draw[thick, ->, draw=red] (\Axe,\Ay-0.3) -- (\Axe,\Ay-1.2) node[midway,fill=white, text=red] {\tiny \trajAComm{}};
        \draw[thick, <-, draw=red] (\Bxc-0.2,\By) -- (\Bxc-0.2,\By+0.3)  node[anchor=south,text=black] {\tiny case 1};
        \draw[thick, <-, draw=red] (\Bxc+0.35,\By) -- (\Bxc+0.35,\By+0.3) node[anchor=south,text=black] {\tiny case 2};
        \draw[thick, <-, draw=red] (\Bxr+0.4,\By) -- (\Bxr+0.4,\By+0.3) node[anchor=south,text=black] {\tiny case 3};
        \draw[thick, <->, draw=black] (\Axe,-0.2) -- (\columnwidth,-0.2) node[midway, anchor=south, text=black] {\tiny \textbf{\textcolor{red}{\trajAOpt{}} will not arrive after \DCStepA{}}};
        
        \node[font=\bfseries,right] at (\Axo,\Ay-0.15) {\tiny O\textsubscript{A}};
        \node[font=\bfseries,right] at (\Axr,\Ay-0.15) {\tiny DC\textsubscript{A}};

        \node[font=\bfseries,right] at (\Bxo,\By-0.15) {\tiny O\textsubscript{B}};
        \node[font=\bfseries,right] at (\Bxc,\By-0.15) {\tiny DC\textsubscript{B}};
        
        
        \node[color=gray] at (0.5,\Ay-0.15) {\scriptsize Prev. iter.};
        \node[color=gray] at (0.95\columnwidth,\Ay-0.15) {\scriptsize Next iter.};
        \node[color=gray] at (0.5,\By-0.15) {\scriptsize Prev. iter.};
        \node[color=gray] at (0.95\columnwidth,\By-0.15) {\scriptsize Next iter.};
        
    \end{tikzpicture}
    }
  \captionof{figure}{RMADER deconfliction without \CStep{}: After \OStepA, Agent A publishes its newly optimized trajectory \trajAOpt{}, while it keeps executing \trajAComm{}. As with Fig.~\ref{fig:rmader_deconfliction}, the agent checks conflicts in \DeltaDCA{}, and if collisions are detected in \DeltaDCA{}, Agent A continues on executing \trajAComm{}. In case \DCStepA{} does not detect collisions, Agent A broadcasts and starts implementing \trajAOpt{} (i.e., $\text{\trajAComm{}}\leftarrow \text{\trajAOpt{}}$).\label{fig:wo_check_rmader_deconfliction}}
  \end{centering}
\end{minipage}
\end{figure}


\section{Simulation Results}\label{sec:sim}
\subsection{Benchmark Studies}\label{subsec:benchmark}

\colorlet{color0ms}{blue}
\colorlet{color50ms}{OliveGreen}
\colorlet{color100ms}{RawSienna}
\colorlet{color200ms}{Magenta}
\colorlet{color300ms}{Orange}

\newcommand{\threec}[3]{\ensuremath{\textcolor{color0ms}{#1}|\textcolor{color50ms}{#2}|\textcolor{color100ms}{#3}}}

\newcommand{\fivec}[5]{\begin{centering}\ensuremath{\hspace{0.7em} \textcolor{color0ms}{\scriptsize #1}|\textcolor{color50ms}{\scriptsize #2} \newline \hspace{-5em} \textcolor{color100ms}{ \scriptsize #3}|\textcolor{color200ms}{\scriptsize #4}|\textcolor{color300ms}{\scriptsize{#5}}}\end{centering}}

\newcommand{\fivecfordeadlock}[5]{\begin{centering}\ensuremath{\hspace{0.55em}\textcolor{color0ms}{#1}|\textcolor{color50ms}{#2} \newline \hspace{-5em} \textcolor{color100ms}{#3}|\textcolor{color200ms}{#4}|\textcolor{color300ms}{#5}}\end{centering}}

\newcommand{\threecb}[3]{\ensuremath{\textcolor{color0ms}{\textbf{#1}}|\textcolor{color50ms}{\textbf{#2}}|\textcolor{color100ms}{\textbf{#3}}}}
\newcommand{\fivecb}[5]{\begin{centering}\ensuremath{\hspace{0.7em}\textcolor{color0ms}{\scriptsize \textbf{#1}}|\textcolor{color50ms}{\scriptsize \textbf{#2}} \newline \textcolor{color100ms}{\scriptsize \textbf{#3}}|\textcolor{color200ms}{\scriptsize \textbf{#4}}|\textcolor{color300ms}{\scriptsize \textbf{#5}}}\end{centering}}

\begin{table}[h]
    \scriptsize
    \caption{\centering Benchmark Studies: Cases \textcolor{color0ms}{$\delayIntroduced{}=0$~ms}, \textcolor{color50ms}{$\delayIntroduced{}=50$~ms}, \textcolor{color100ms}{$\delayIntroduced{}=100$~ms}, \textcolor{color200ms}{$\delayIntroduced{}=200$~ms}, and \textcolor{color300ms}{$\delayIntroduced{}=300$~ms}.}
    \label{tab:sim_compare}    
    \centering
    \resizebox{\columnwidth}{!}{
    \begin{tabular}{>{\centering\arraybackslash}m{0.07\textwidth} >{\centering\arraybackslash}m{0.05\textwidth} >{\centering\arraybackslash}m{0.08\textwidth} >{\centering\arraybackslash}m{0.08\textwidth} >{\centering\arraybackslash}m{0.06\textwidth}}
        \toprule
        \centering Method & \centering Async.? & \centering \delayParameter{} [ms] & \centering Collision-free rate [\%] & Deadlock rate [\%] \tabularnewline
        \midrule

        \RMADER{} & \YesGreen & \fivec{75}{125}{175}{275}{375} & \fivecb{100}{100}{100}{100}{100} & \fivec{1}{1}{0}{0}{0}  \tabularnewline
        \hline

        \WOCHECKRMADER{} & \YesGreen & \fivec{75}{125}{175}{275}{375} & \fivecb{100}{100}{100}{100}{100} & \fivec{0}{2}{3}{0}{5} \tabularnewline
        \hline

        \MADER{} & \YesGreen & N/A & \fivecb{95}{92}{87}{72}{70} & \fivec{0}{0}{0}{0}{0} \tabularnewline
        \hline

        \EGOswarm{} & \YesGreen & N/A & \fivecb{34}{27}{27}{48}{29} & \fivec{0}{0}{0}{0}{0} \tabularnewline
        \hline

        \EDGteam{} & \YesGreen/\NoRed & N/A & \fivecb{100}{100}{99}{99}{99} & \fivec{0}{0}{0}{0}{0}  \tabularnewline
        \bottomrule 
    \end{tabular}
    }
    \vspace{-1em}
\end{table}
\begin{figure}
    \centering
    \subcaptionbox{10-agent position exchange simulation environment.\label{fig:circle_traj_new}}{
    \begin{tikzpicture}[every text node part/.style={align=center}]
    \node {\includegraphics[width=0.45\columnwidth, height=0.38\columnwidth]{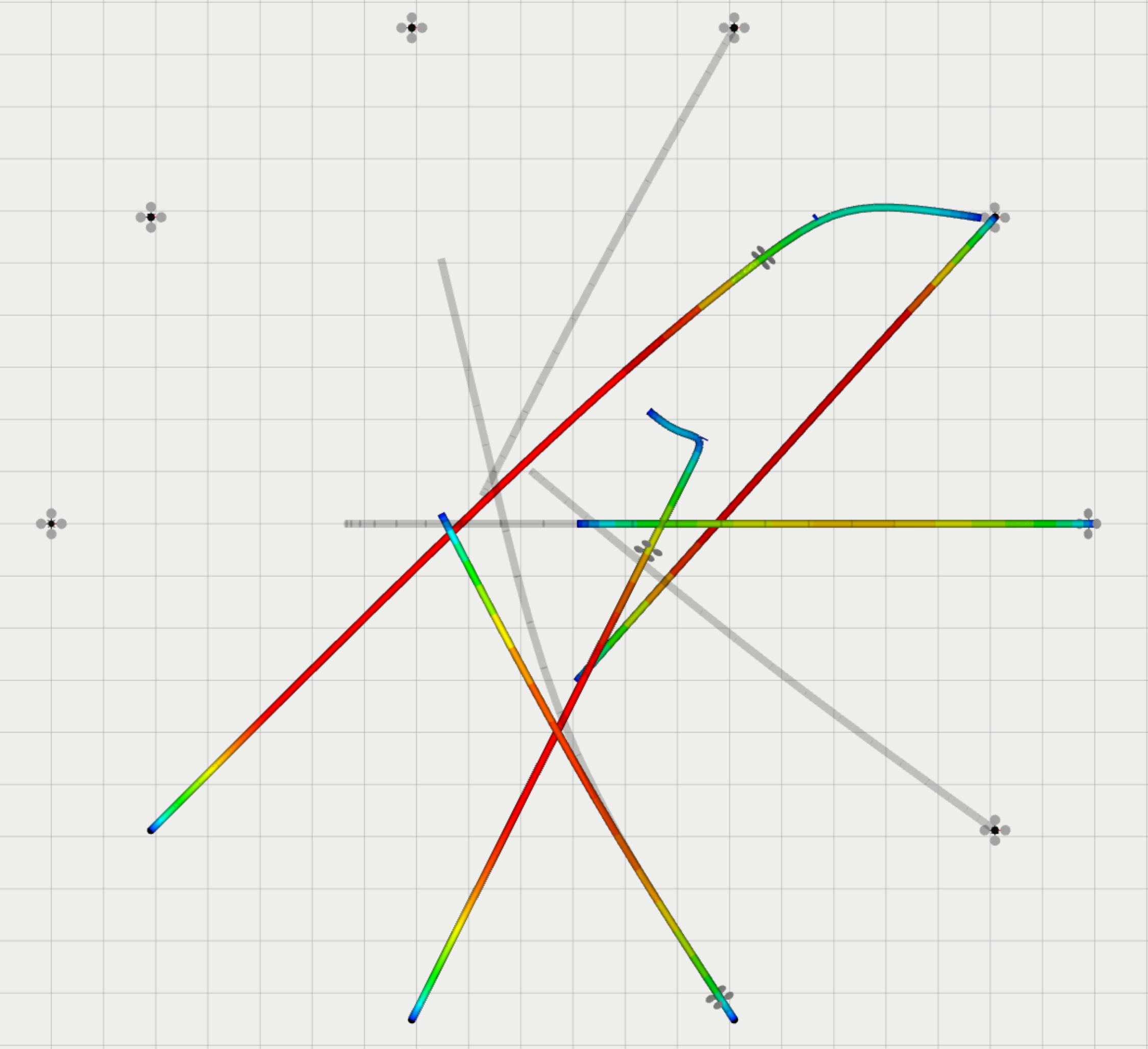}};
    \node [rectangle] at (1.2, -1.4) {\scriptsize Agent J};
    \node [circle, draw] (c) at (0.5,-1.4){};
    \node (trajjnew) [rectangle] at (-1.2, 1.2) {\scriptsize \trajJOpt{} (grey)}; 
    \draw [-stealth] (trajjnew) -- (-0.4, 0.4);
    \node (trajj) [rectangle] at (-1, -1) {\scriptsize \trajJComm{} (colored)}; 
    \draw [-stealth] (trajj) -- (-0.38, -0.4);
    \end{tikzpicture}}
    \subcaptionbox{10 RMADER agents with 10 dynamic obstacles.\label{fig:rmader_obs}}{
    \begin{tikzpicture}[every text node part/.style={align=center}]
    \node {\includegraphics[width=0.45\columnwidth, height=0.38\columnwidth]{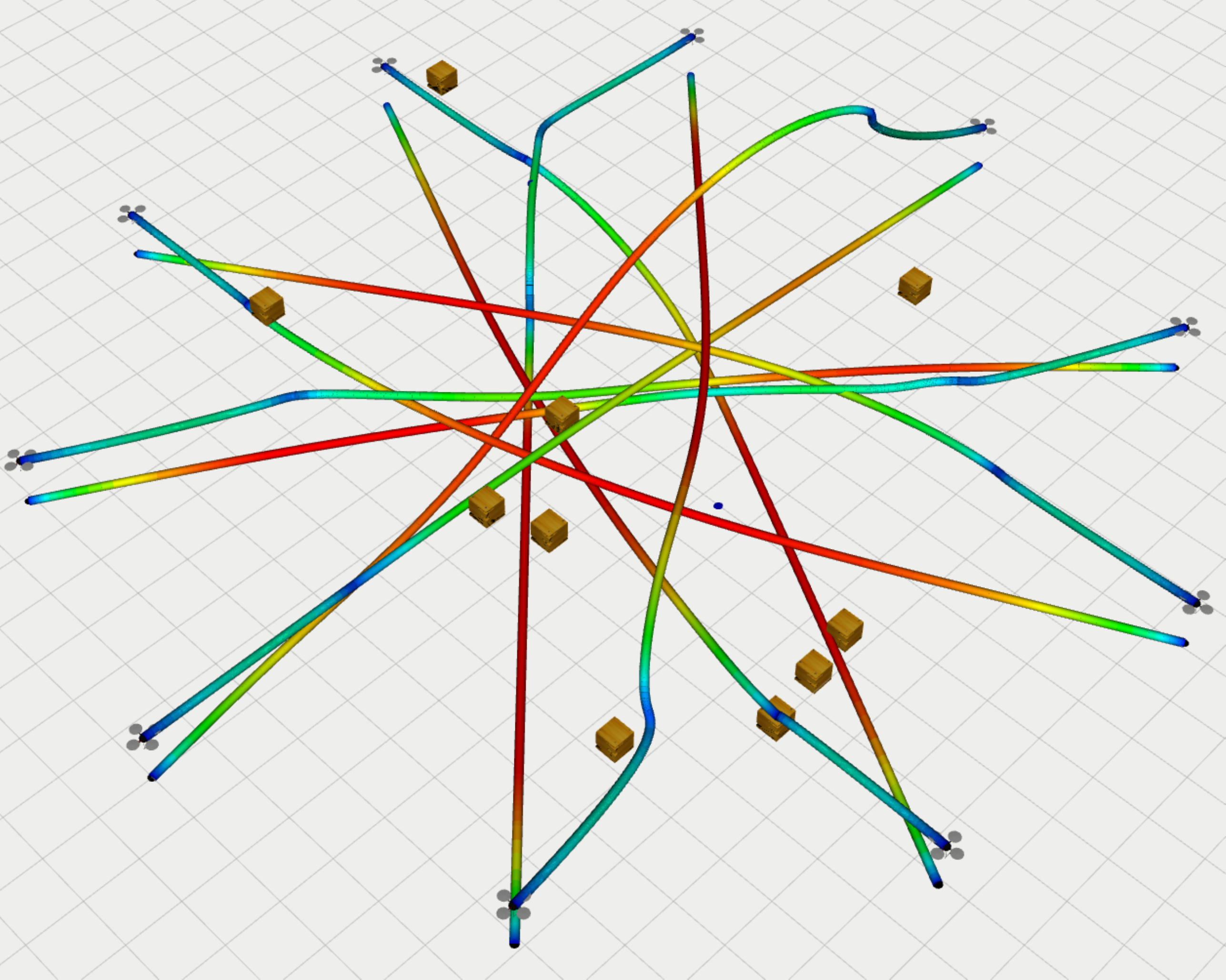}};
    \end{tikzpicture}}
    \caption{10 agents employing RMADER exchange their positions in a circle of radius \SI{20}{\m}. In the colored trajectories, red represents a high speed while blue denotes a low speed. In (a), The colored trajectory is the committed (safety-guaranteed) trajectory (\trajJComm{}), and the grey trajectory is the newly optimized trajectory (\trajJOpt{}).}
    \label{fig:rmader_sim1}
    \vspace{-1em}
\end{figure}
\begin{figure}
    \vspace{1em}
    \begin{centering}        
    \begin{tabular}{ccc}
      \subfloat[ \centering Collision-free Trajectory Rate \label{fig:sim_collision_free_traj_rate}]{\includegraphics[width=0.45\columnwidth]{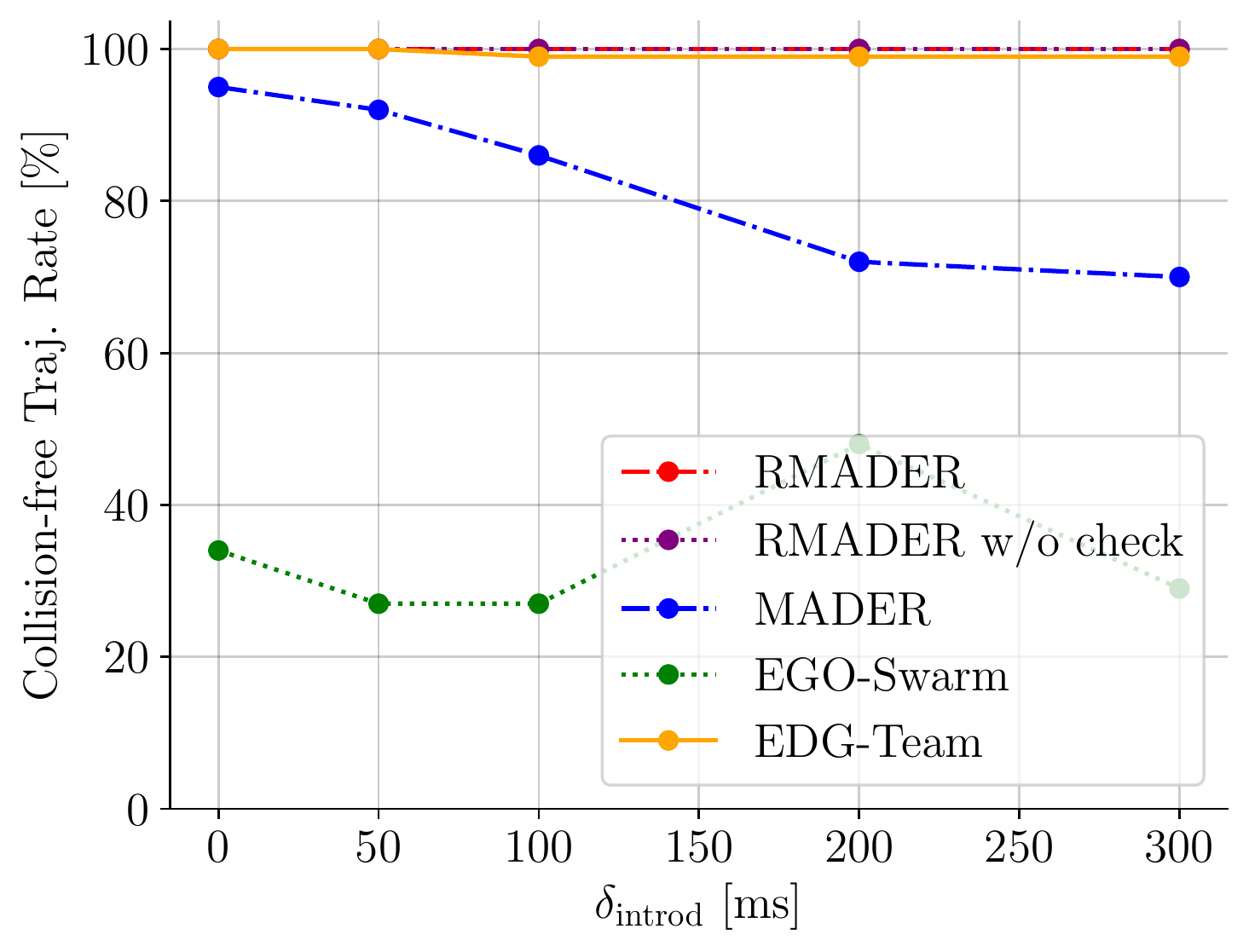}} &
      \subfloat[\centering Avg. Travel Time\label{fig:sim_completion_time}]{\includegraphics[width=0.45\columnwidth]{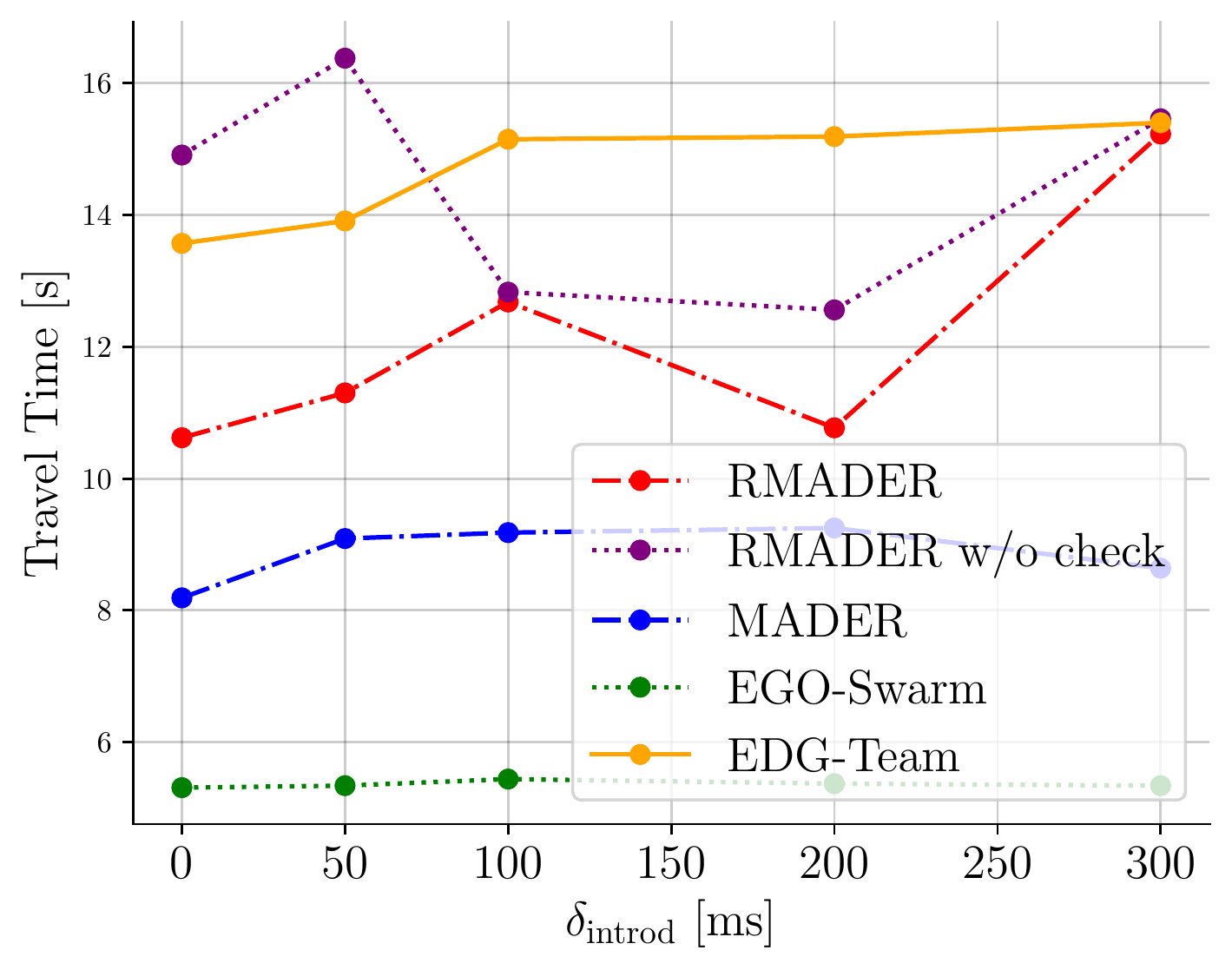}} \\
      \subfloat[\centering Number of Stops \label{fig:sim_traj_smooth_acc}]{\includegraphics[width=0.45\columnwidth]{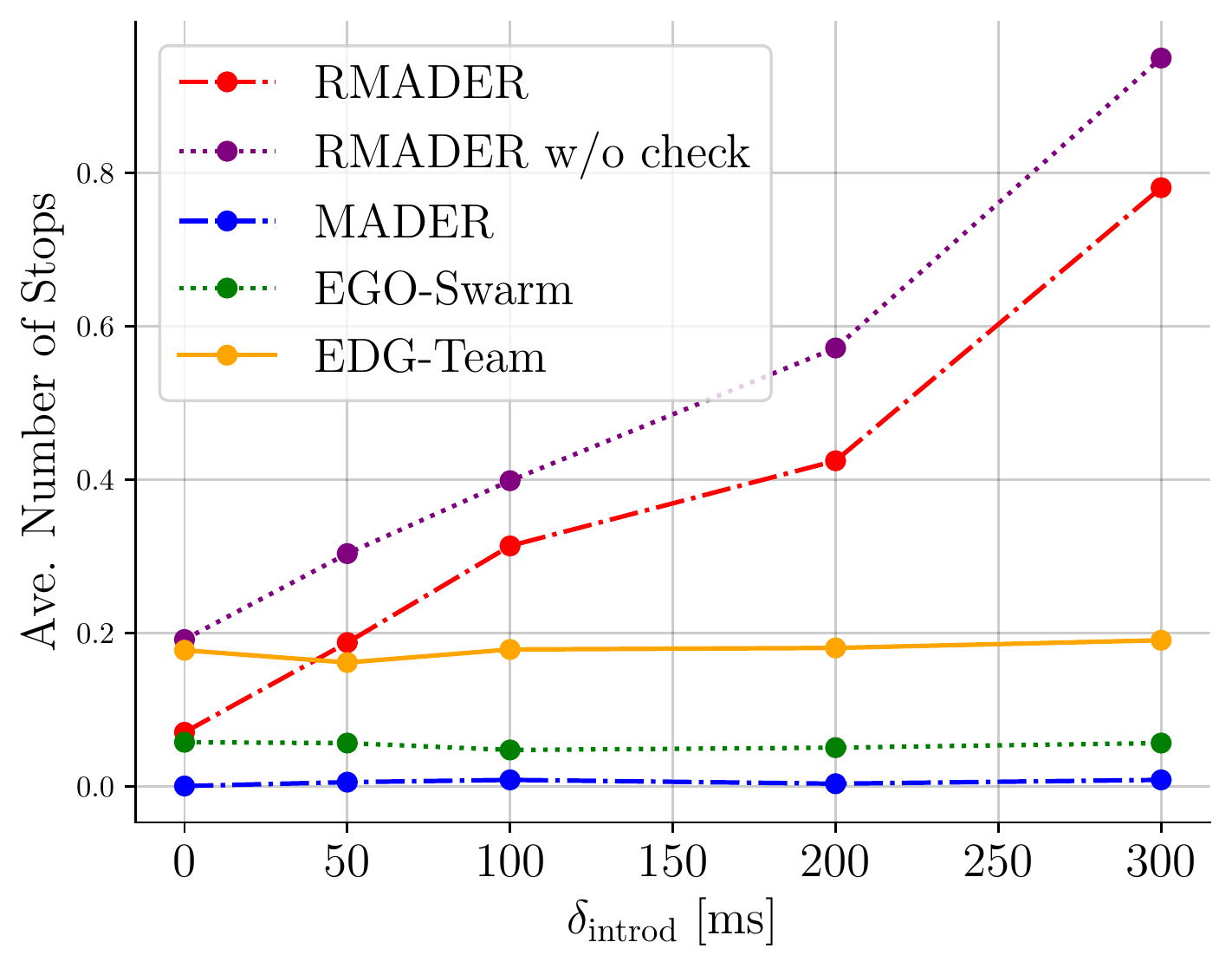}} &
      \subfloat[\centering Trajectory Smoothness (Jerk) \label{fig:sim_traj_smooth_jer}]{\includegraphics[width=0.45\columnwidth]{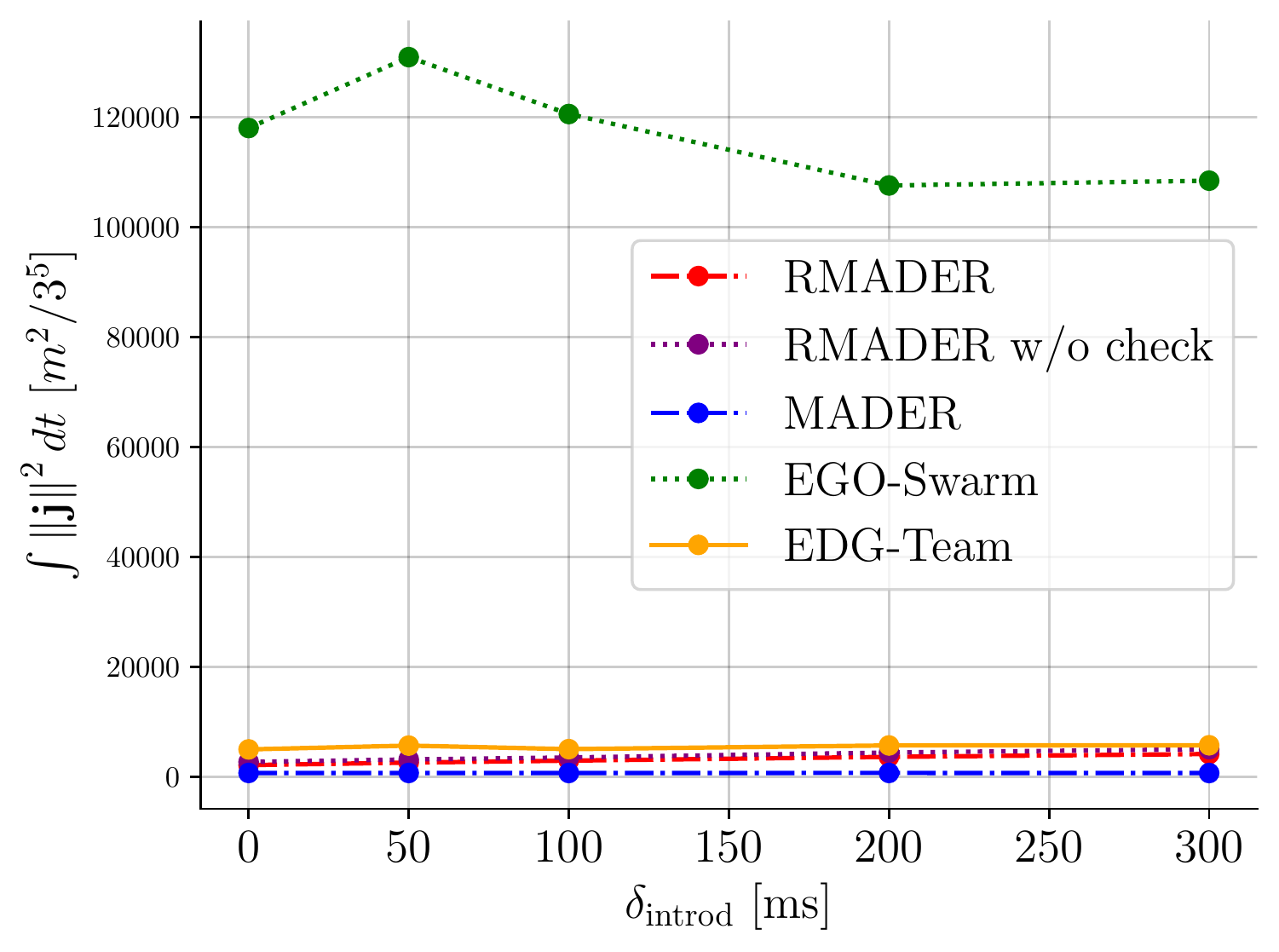}} \\
      \subfloat[\centering Average Travel Distance\label{fig:sim_total_travel_dist}]{\includegraphics[width=0.45\columnwidth]{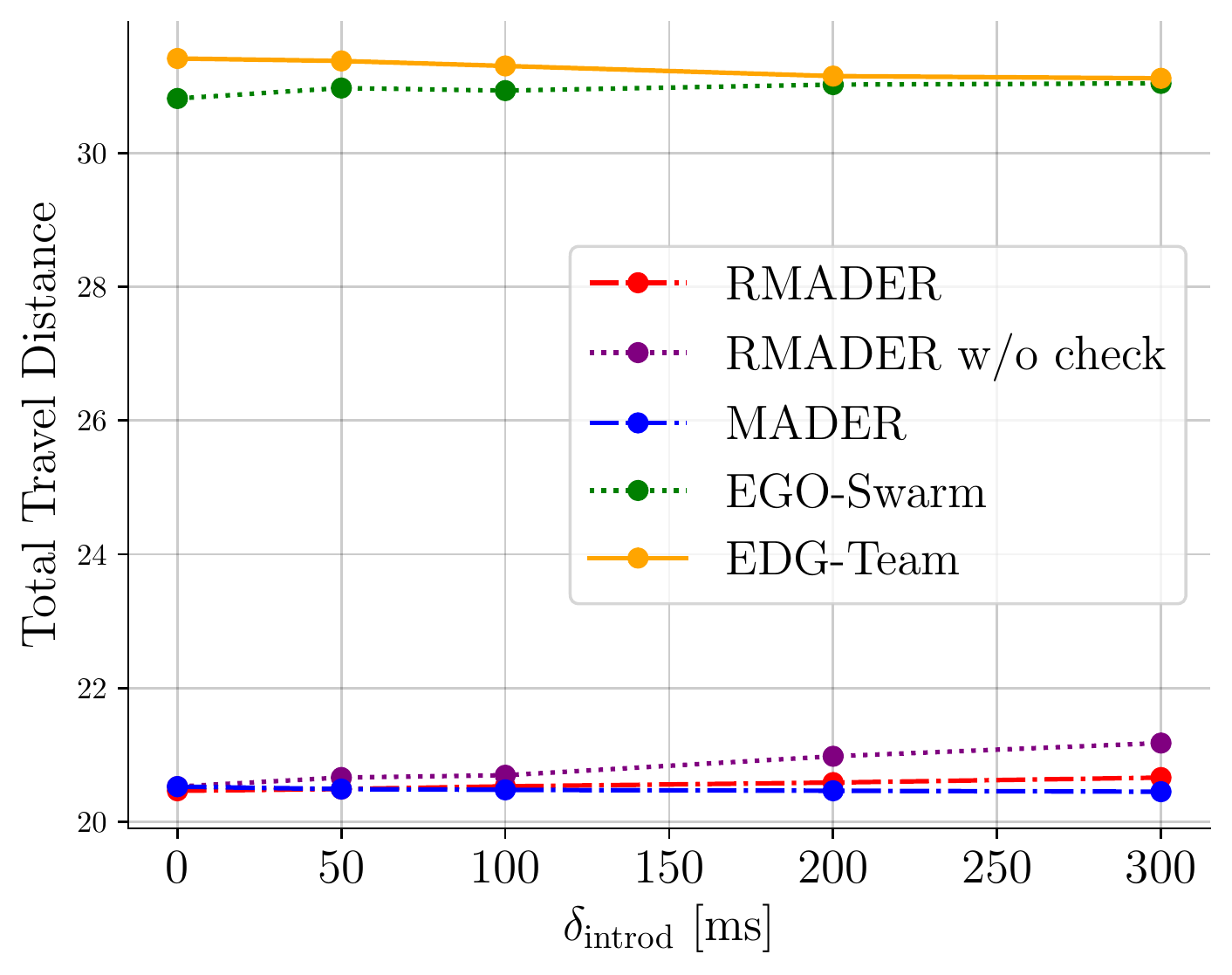}} &
      \subfloat[\centering Distribution of \delayActual{}\label{fig:comm_delay_in_simulation}]{\includegraphics[width=0.45\columnwidth]{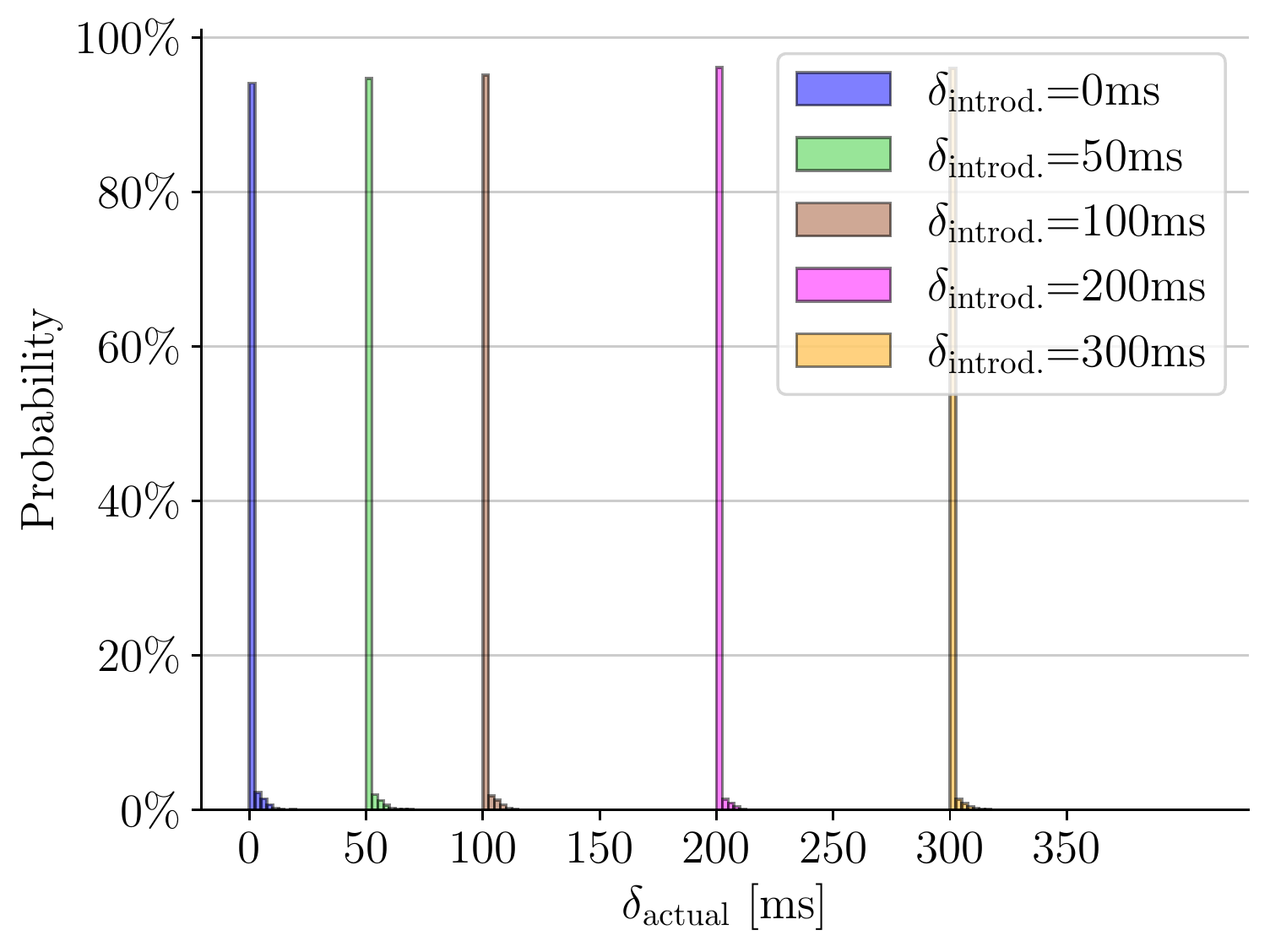}}
    \end{tabular}
    \vspace{-0.5em}
    \caption{100-simulation results.
    (a) RMADER generates collision-free trajectory at 100\%, while other state-of-the-art approaches fail when communication delays are introduced. 
    To maintain collision-free trajectory generation, An RMADER agent periodically maintains two trajectories, and other agents need to consider two trajectories as a constraint, which could lead to conservative plans---longer \emph{Travel Time} (b) and more \emph{Avg. Number of Stops} (c).
    This is a trade-off between safety and performance.
    MADER reports a few collided trajectory because \delayActual{} $>$ \SI{0}{\ms}.
    It is also important to note that although we introduced fixed communication delays, due to the computer’s computational limits, messages actually do not arrive instantly, as shown in Fig.~\ref{fig:comm_delay_in_simulation}.
    These figures differ slightly from the ones presented in \cite{kondo2022robust}---we believe the reasons are (1) RMADER's parameters are extensively tuned, (2) to introduce artificial communication delays efficiently, we improved the code for all the methods, (3) simulations with multiagent planners are stochastic.} 
    \label{fig:sim_simulation_summary}
    \end{centering}
    \vspace{-2em}
\end{figure}

We tested EGO-Swarm~\cite{zhou_ego-swarm_2020}, EDG-Team~\cite{hou_enhanced_2022}, MADER~\cite{tordesillas_mader_2022}, RMADER, and RMADER without Check, on a \texttt{Lambda} computer with AMD Threadripper 3960X 24-Core, 48-Thread. 
We introduced $0$, $50$, $100$, $200$, and \SI{300}{\ms} communication delay into every message exchanged among agents. 
We then conducted \textbf{100 simulations} with 10 agents positioned in a \SI{10}{\m} radius circle, exchanging positions diagonally as shown in Fig.~\ref{fig:rmader_sim1}.
The maximum dynamic limits are set to \SI{10}{\m/\s}, \SI{20}{\m/\s^2}, and \SI{30}{\m/\s^3}.

Table~\ref{tab:sim_compare} and Fig.~\ref{fig:sim_simulation_summary} showcase each approach's performance in simulations. 
When \NeccessaryCond{} holds, collision-free trajectory planning is guaranteed, and therefore RMADER and RMADER without Check generate \textbf{0 collisions} for all the \delayIntroduced{}, while other approaches fail to avoid collisions. 
As expected, the longer \delayIntroduced{} more collisions EGO-Swarm, EDG-Team, and MADER generate.

As for the utility of \CStep{}, Fig.~\ref{fig:sim_simulation_summary} illustrates that agents running RMADER without Check take more time to travel and experience a higher number of stops. As elaborated in Section~\ref{sec:rmader-wo-check}, this can be attributed to its potential to reject more trajectories. These results emphasize the importance of incorporating \CStep{} into RMADER's deconfliction strategy.

It is also worth mentioning that RMADER's robustness to communication delays is obtained by layers of conflict checks in \DCStep{} and agents periodically holding two trajectories, which can result in generating conservative trajectories and trading off UAV performance. 
For instance, \emph{Avg. Number of Stops} in Table~\ref{tab:sim_compare}, suggests more stoppage than other approaches, and although RMADER reports no collisions, it suffers a small number of deadlocks.

\subsection{Simulations with Dynamic Obstacles}

To verify RMADER's robustness to dynamic environments in simulation, we performed 100 simulations with 10 agents with 10 dynamic obstacles under \SI{50}{\ms} communication delays and compare its performance to MADER. The agents' size is \qtyproduct{0.05 x 0.05 x 0.05}{\m}, and the obstacles' size is \qtyproduct{0.4 x 0.4 x 0.4}{\m}, and the obstacles follow randomized trefoil trajectories. Fig.~\ref{fig:rmader_obs} presents the simulation environment, and Table~\ref{tab:sim_with_obs} illustrates RMADER successfully achieved \textbf{\SI{100}{\%} collision-free trajectory generation}. Similarly to Section\ref{subsec:benchmark}, RMADER's performance is traded off to achieve safety.

\begin{table}[!h]
    \caption{Simulations with 10 obstacles under \SI{50}{\m\s} of communication delay. RMADER successfully achieves collision-free trajectory generation with dynamic obstacles.}
    \label{tab:sim_with_obs}    
    \renewcommand{\arraystretch}{1.2}
    \centering
    \resizebox{0.8\columnwidth}{!}{
    \begin{tabular}{>
    {\centering\arraybackslash}m{0.2\columnwidth} >{\centering\arraybackslash}m{0.2\columnwidth}  >{\centering\arraybackslash}m{0.2\columnwidth}  >{\centering\arraybackslash}m{0.2\columnwidth}  >{\centering\arraybackslash}m{0.2\columnwidth} }
        \toprule
         & Collision-free rate [\%] & \centering Avg number of stops & \centering Avg Travel Time [s] & \centering Avg Travel Distance [m] \tabularnewline
        \midrule
        \textbf{MADER} & 99 & 0.056 & 10.47 & 20.45 \tabularnewline
        \hline
        \textbf{RMADER} & \textbf{100} & 0.163 & 15.81 & 20.52 \tabularnewline
        \bottomrule 
    \end{tabular}
    }
    \vspace{-1em}
\end{table}


\section{Hardware Experiments}


\begin{table}
\caption{\centering MADER on Centralized Network}
\label{tab:mader_hw_centralized}
\begin{centering}
\renewcommand{\arraystretch}{1.2}
\resizebox{0.8\columnwidth}{!}{
\begin{tabular}{ c c c c c c }
\toprule
 & \textbf{Exp. 1} & \textbf{Exp. 2} & \textbf{Exp. 3} & \textbf{Exp. 4} & \textbf{Exp.5} \tabularnewline
\midrule
\textbf{Max vel. [m/s]} & 2.7 & 2.5 & 2.7 & 2.7 & 3.0 \tabularnewline
\hline 
\textbf{Avg. travel distance [m]} & 70.3 & 67.6 & 61.1 & 61.1 & 65.2\tabularnewline
\hline
\textbf{Stop time [s]} & 1.3 & 1.3 & 2.2 & 3.4 & 4.3 \tabularnewline
\bottomrule
\end{tabular}}
\par\end{centering}
\end{table}

\begin{table}
\caption{\centering RMADER on Centralized Network}
\label{tab:rmader_hw_centralized}
\begin{centering}
\renewcommand{\arraystretch}{1.2}
\resizebox{0.8\columnwidth}{!}{
\begin{tabular}{ c c c c c c }
\toprule
 & \textbf{Exp. 6} & \textbf{Exp. 7} & \textbf{Exp. 8} & \textbf{Exp. 9} & \textbf{Exp. 10}\tabularnewline
\midrule
\textbf{Max vel. [m/s]} & 2.6 & 2.7 & 2.8 & 2.7 & 2.7\tabularnewline
\hline 
\textbf{Avg. travel distance [m]} & 45.6 & 58.2 & 58.4 & 58.3 & 54.8\tabularnewline
\hline
\textbf{Stop time [s]} & 12.3 & 10.6 & 8.1 & 9.9 & 7.4 \tabularnewline
\bottomrule
\end{tabular}}
\par\end{centering}
\end{table}

\begin{figure}[h]
    \centering
    \subfloat[\scriptsize Distribution of \delayActual{} in mesh (decentralized) and centralized network.]{
    \includegraphics[width=0.4\columnwidth]{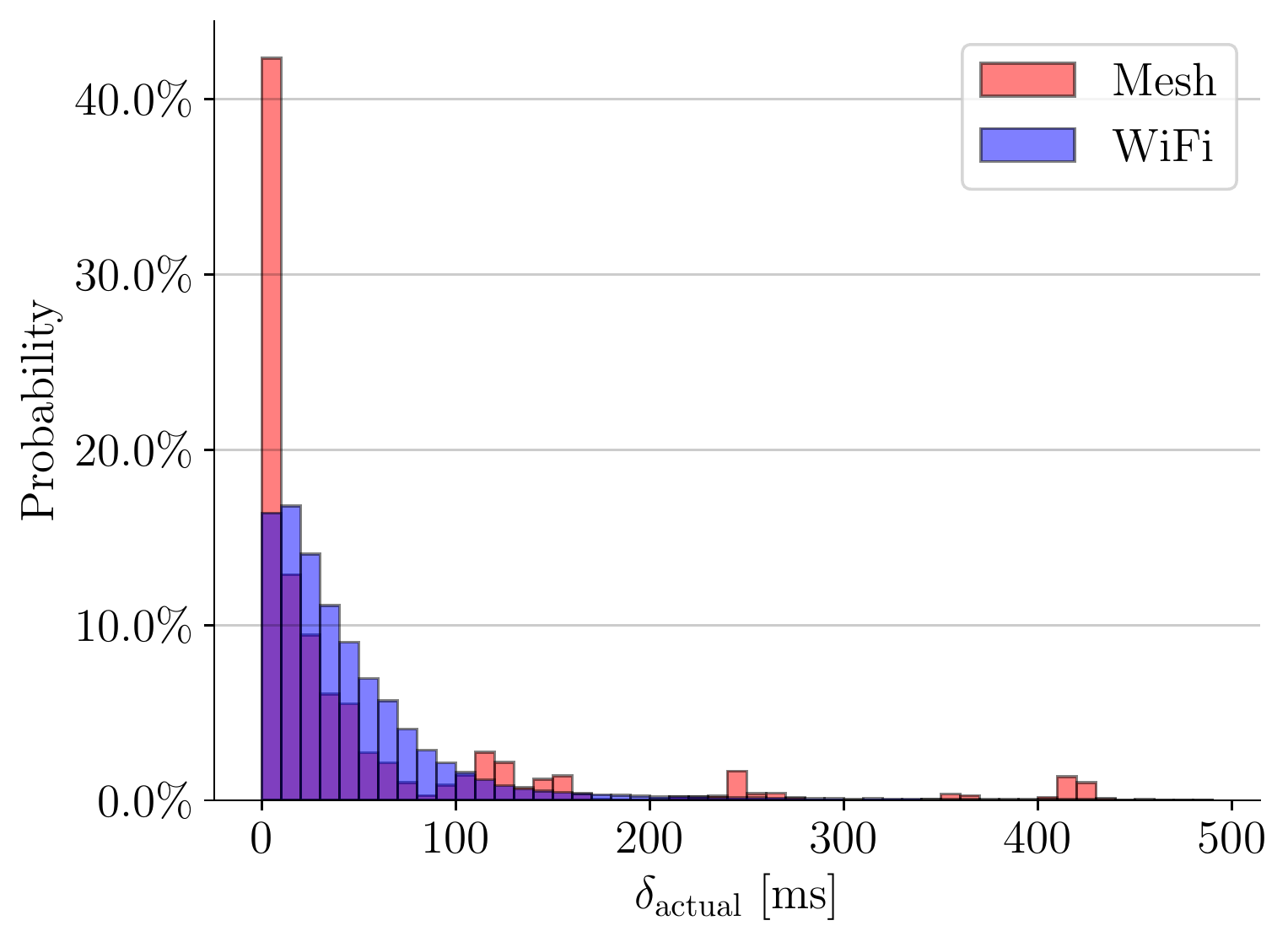}
    \label{fig:comm_delay_on_mesh}
    }
    \quad
    \subfloat[\scriptsize Centralized]{
        \scalebox{0.6}{
        \begin{tikzpicture}
            \node[inner sep=0pt] (router) at (0.8,2)
                {\includegraphics[width=.15\columnwidth]{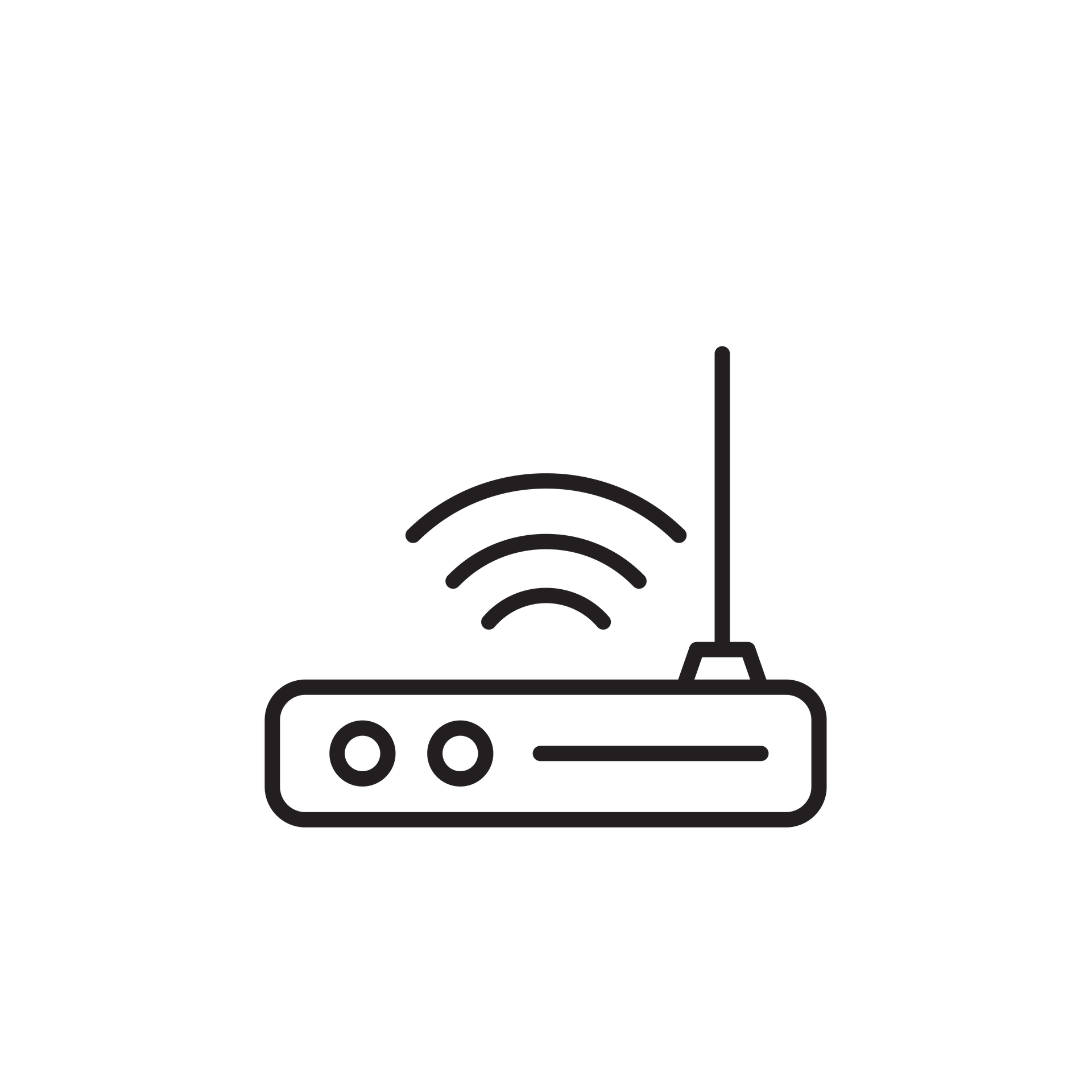}};
            \node[inner sep=0pt] (UAV1) at (0,0)
                {\includegraphics[width=.06\columnwidth]{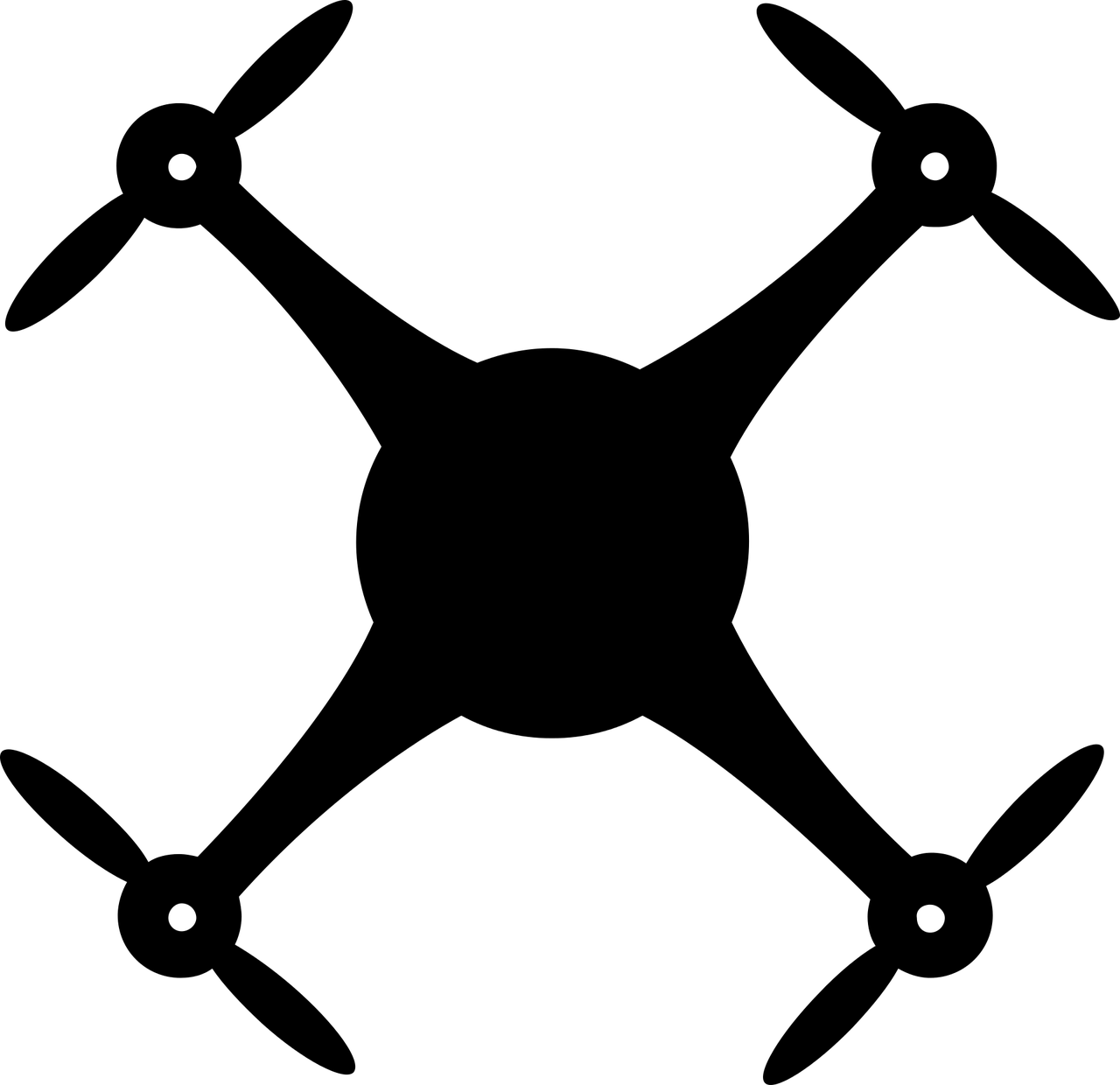}};
            \node[inner sep=0pt] (UAV2) at (0.7,0)
                {\includegraphics[width=.06\columnwidth]{figures/UAV.png}};
            \node[inner sep=0pt] (UAV3) at (1.8,0)
                {\includegraphics[width=.06\columnwidth]{figures/UAV.png}};
            \path (UAV2) -- node[auto=false]{\ldots} (UAV3); 
            \draw[<->] (UAV1) -- (router);
            \draw[<->] (UAV2) -- (router);
            \draw[<->] (UAV3) -- (router);
        \end{tikzpicture}
        }
    }
    \quad
    \subfloat[\scriptsize Decentralized]{
    \scalebox{0.6}{
    \begin{tikzpicture}
        \node[inner sep=0pt] (UAV4) at (3,2)
            {\includegraphics[width=.06\columnwidth]{figures/UAV.png}};
        \node[inner sep=0pt] (UAV5) at (3.2,0.5)
            {\includegraphics[width=.06\columnwidth]{figures/UAV.png}};
        \node[inner sep=0pt] (UAV6) at (4,1.8)
            {\includegraphics[width=.06\columnwidth]{figures/UAV.png}};
        \node[inner sep=0pt] (UAV7) at (4.5,0)
            {\includegraphics[width=.06\columnwidth]{figures/UAV.png}};
        \path (5,1) -- node[auto=false]{\ldots} (5.5,1); 
        \draw[<->] (UAV4) -- (UAV5);
        \draw[<->] (UAV4) -- (UAV6);
        \draw[<->] (UAV4) -- (UAV7);
        \draw[<->] (UAV5) -- (UAV6);
        \draw[<->] (UAV5) -- (UAV7);
        \draw[<->] (UAV6) -- (UAV7);
        \draw[<-] (UAV4) -- (5,1);
        \draw[<-] (UAV5) -- (5,1);
        \draw[<-] (UAV6) -- (5,1);
        \draw[<-] (UAV7) -- (5,1);
    \end{tikzpicture}
    }
    }
    \caption{Communication network architectures and distribution of \delayActual{}: (a) distribution of \delayActual{}. (b) Agents must communicate via a centralized router. (c) Agents can communicate directly.}
    \label{fig:network}
    \vspace{-1em}
\end{figure}

We performed \textbf{22 hardware experiments} to demonstrate RMADER's robustness to communication delays as well as MADER's shortcomings: 5 flights for RMADER over a centralized network, 5 flights for MADER over a centralized network, 5 flights for RMADER over a decentralized (mesh) network, and 7 flights for RMADER on a decentralized (mesh) network with dynamic obstacles.
All the planning and control run onboard, and the state estimation is obtained by fusing IMU measurements with a motion capture system.
Each UAV has \texttt{Intel} NUC10 with 64GB RAM onboard.
Experiments 1--16 used \qtyproduct{9.2 x 7.5 x 2.5}{\m} flight space, and Experiments 17-22 used \qtyproduct{18.4 x 7.5 x 2.5}{\m}.

\subsection{Centralized vs. Decentralized (Mesh) Network}

We tested RMADER on both centralized and decentralized communication networks.
In a centralized network, a central device such as a WiFi router handles all communication traffic.
Note that the RMADER algorithm still runs onboard each UAV in a decentralized manner, even if the communication is centralized.
On the other hand, in a decentralized (mesh) network, agents communicate directly with each other.
These network architectures are illustrated in Fig.~\ref{fig:network}.
Note that in our mesh network configuration, all agents can communicate with each other, forming a full graph, meaning each agent has direct communication capabilities with every other agent in the network.
Although a centralized network is common in robotics, it has a single point of failure and is not scalable.

Fig.~\ref{fig:comm_delay_on_mesh} shows actual communication delays recorded on Experiments 1--22.
Overall, the mesh network has smaller communication delays compared to a centralized network; however, the mesh network has a few instances of large delays.
These outliers in communication delay are likely due to the use of low-cost wireless hardware and standard ad-hoc protocols and could be mitigated by a more advanced mesh networking system.
Because of these mesh network communication delay outliers, the 50-ms delay check used in all hardware experiments caused $\delayActualMax{}\le\delayParameter{}$ to be violated approximately 25\% of the time.
In these cases, RMADER cannot guarantee collision safety.
However, in Experiments 1--22, collisions are still avoided, and as discussed in~\cite{kondo2022robust}, 75th percentile coverage of communication delays significantly decreases collisions.
%

\subsection{RMADER vs. MADER on Centralized Network}
 A total of 10 hardware experiments (5 flights for each) demonstrate RMADER's robustness to communication delays as well as MADER's shortcomings. Each flight test had 6 UAVs and lasted \SI{1}{\minute}. 

During the MADER hardware experiments, due to the effects of communication delays, \textbf{7} potential collisions were detected. RMADER, on the other hand, did not generate conflicts. 
The maximum velocities and travel distances achieved in the hardware experiments are shown in Table~\ref{tab:mader_hw_centralized} and \ref{tab:rmader_hw_centralized}. Corresponding to simulation results in Section~\ref{sec:sim}, RMADER trades off safety with performance.

\subsection{RMADER on Decentralized (Mesh) Network}
We also performed RMADER on a decentralized (mesh) network. 
Table~\ref{tab:rmader_hw_mesh} shows RMADER's performance on a mesh network, and Fig.~\ref{fig:6agent_mesh} illustrates 6 agents carrying out deconfliction.
The collision-safety boundary box around each UAV was set to \qtyproduct{0.8 x 0.8 x 1.0}{\m}, and the dynamic limits were set to \SI{2.0}{\m/\s}, \SI{3.0}{\m/\s^2}, and \SI{4.0}{\m/\s^3}, except for Experiment 5, where we relaxed the dynamic constraints to \SI{4.0}{\m/\s}, \SI{5.0}{\m/\s^2}, and \SI{6.0}{\m/\s^3}.
Note that these dynamic constraints are element-wise, e.g., a UAV can achieve speed up to $4\sqrt{3} \approx \SI{6.93}{m/s}$.
We observed a deadlock at the very end of Experiment 12.
Compared to flight space, the boundary box used in these experiments is relatively large, and because we use hard constraints in optimization, this could lead to deadlock.

\begin{table}
\vspace{1em}
\caption{\centering RMADER on Mesh Metwork}
\begin{centering}
\renewcommand{\arraystretch}{1.0}
\resizebox{0.8\columnwidth}{!}{
\begin{tabular}{ c c c c c c }
\toprule
 & \textbf{Exp. 11} & \textbf{Exp. 12} & \textbf{Exp. 13} & \textbf{Exp. 14} & \makecell{\textbf{Exp. 15} \\ (fast)}\tabularnewline
\midrule 
\textbf{Max vel. [m/s]} & 2.7 & 3.0 & 2.9 & 2.8 & 3.0\tabularnewline
\hline 
\textbf{Avg. travel distance [m]} & 46.9 & 48.0 & 49.4 & 66.6 & 60.3 \tabularnewline
\hline
\textbf{Stop time [s]} & 13.7 & 10.6 & 21.4 & 6.2 & 8.8 \tabularnewline
\bottomrule
\end{tabular}}
\par\end{centering}
\label{tab:rmader_hw_mesh}
\end{table}

\subsection{RMADER with Dynamic Obstacles on Mesh Network}
This section illustrates a total of 7 hardware experiments.
The 2 and 4-agent experiments lasted \SI{1}{\minute}, and 6 agents exchange their position once. The dynamic constraints are \SI{3.0}{m/s}, \SI{4.0}{m/s^2}, and \SI{5.0}{m/s^3}, but in Experiments 16, 19 and 22, we increased them to 
\SI{5.0}{m/s}, \SI{7.0}{m/s^2}, and \SI{10.0}{m/s^3}, and the dynamic obstacles follow pre-determined trefoil trajectories. 
Table~\ref{tab:rmader_with_obstacles} shows RMADER's performance with obstacles, and Figs~\ref{fig:2agent1obs}, ~\ref{fig:4agent2obs}, and \ref{fig:6agent2obs} illustrate 2, 4, and 6 agents with RMADER running onboard successfully carrying out position exchange while avoiding dynamic obstacles.    
Due to a hardware issue, one of the 6 agents in Experiments 20--22 was not able to publish its trajectory; however, because it was receiving trajectories from other agents and because of RMADER's decentralized deconfliction mechanism, we observed no collisions in these experiments. 
\begin{table}[t]
\vspace{1em}
\caption{\centering Hardware experiments with dynamic obstacles}
\label{tab:rmader_with_obstacles}
\begin{centering}
\renewcommand{\arraystretch}{1.2}
\resizebox{0.8\columnwidth}{!}{
\begin{tabular}{ c c | c c c | c c c }
\toprule
& \multicolumn{1}{c}{\makecell{\textbf{2 agents} \\ (1min)}} & \multicolumn{3}{c}{\makecell{\textbf{4 agents} \\ (1min) }} &  \multicolumn{3}{c}{\makecell{\textbf{6 agents} \\ (one time) }}\tabularnewline
 & \makecell{\textbf{Exp. 16} \\ (fast)} & \textbf{Exp. 17} & \textbf{Exp. 18} & \makecell{\textbf{Exp. 19} \\ (fast)} & \textbf{Exp. 20} & \textbf{Exp. 21} & \makecell{\textbf{Exp. 22} \\ (fast)}  \tabularnewline
\midrule
\textbf{Max vel. [m/s]} & 4.7 & 3.8 & 3.6 & 5.8 & 3.2 & 3.5 & 5.6 \tabularnewline
\hline 
\makecell{\textbf{Avg. travel} \\ \textbf{distance [m]}} & 106.7 & 114.9 & 109.9 & 114.4 & 24.6 & 23.7 & 22.2 \tabularnewline
\hline
\textbf{Stop time [s]} & 1.60 & 3.51 & 0.90 & 4.91 & 2.42 & 0.76 & 0.76 \tabularnewline
\bottomrule
\end{tabular}
}
\par\end{centering}
\vspace{-1em}
\end{table}

\begin{figure}[!htbp]
    \centering
    \subfloat[\scriptsize 6 agents on mesh network.\label{fig:6agent_mesh}]{
    \resizebox{0.5\columnwidth}{!}{
    \input{tikz-figures/6-wo-obsts}
    }
    }
    \subfloat[\scriptsize 2 agents with a dynamic obstacle.\label{fig:2agent1obs}]{
    \centering
    \resizebox{0.5\columnwidth}{!}{
    \begin{tikzpicture}
    \node (img) {\includegraphics[width=\columnwidth, height=0.22\textheight, clip, keepaspectratio]{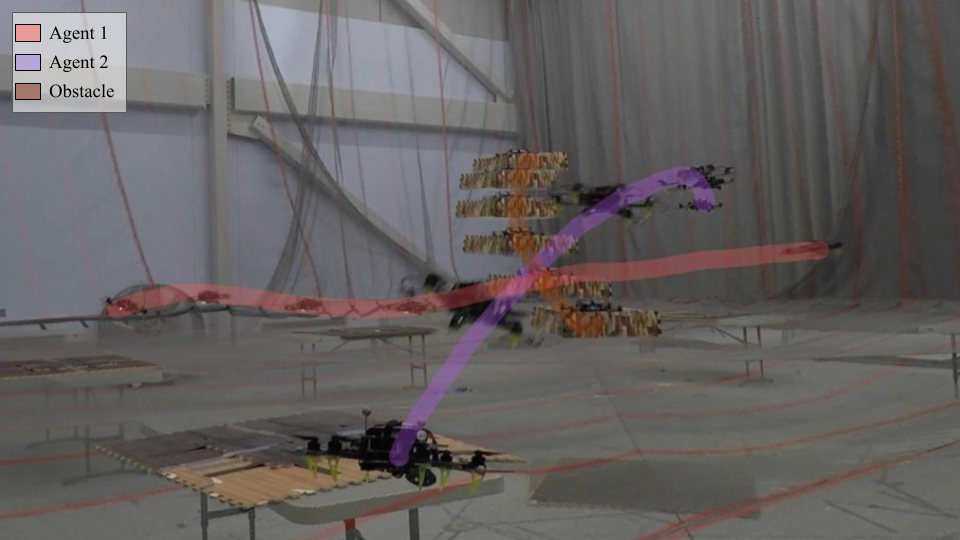}};
    \filldraw[color=black, fill=agent1_color] (3.1, 0.2) circle (2pt);
    \filldraw[color=black, fill=agent2_color] (-0.7, -1.7) circle (2pt);
    \filldraw[color=black, fill=obstacle_color] (0.4, 1.0) circle (2pt);
    \node [rectangle, draw, fill=agent1_color, inner sep=0.7mm] at (-3.3, -0.4) {};
    \node [rectangle, draw, fill=agent2_color, inner sep=0.7mm] at (2.0, 0.6) {};
    \node [rectangle, draw, fill=obstacle_color, inner sep=0.7mm] at (1.0, -0.5) {};
    \end{tikzpicture}
    }
    }
    \\
    \subfloat[\scriptsize 4 agents with 2 dynamic obstacles.\label{fig:4agent2obs}]{
    \resizebox{0.5\columnwidth}{!}{
    \begin{tikzpicture}
    \node (img) {\includegraphics[width=\columnwidth, height=0.22\textheight, keepaspectratio]{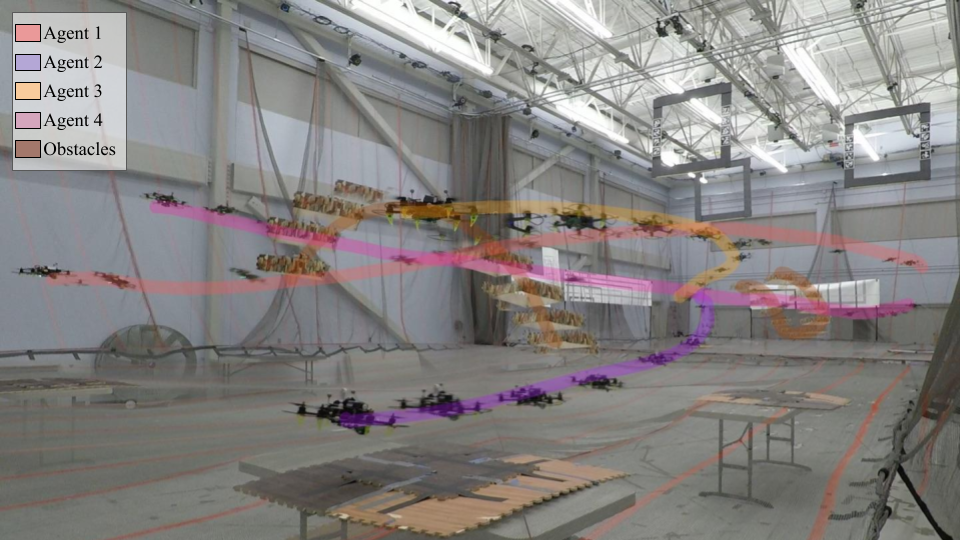}};
    \filldraw[color=black, fill=agent1_color] (-3.9, -0.1) circle (2pt);
    \filldraw[color=black, fill=agent2_color] (1.9, -0.2) circle (2pt);
    \filldraw[color=black, fill=agent3_color] (-0.7, 0.5) circle (2pt);
    \filldraw[color=black, fill=agent4_color] (3.8, -0.3) circle (2pt);
    \filldraw[color=black, fill=obstacle_color] (-1.65, 0.05) circle (2pt);
    \filldraw[color=black, fill=obstacle_color] (2.6, 0) circle (2pt);
    \node [rectangle, draw, fill=agent1_color, inner sep=0.7mm] at (3.9, 0) {};
    \node [rectangle, draw, fill=agent2_color, inner sep=0.7mm] at (-1.1, -1.4) {};
    \node [rectangle, draw, fill=agent3_color, inner sep=0.7mm] at (1.8, -0.2) {};
    \node [rectangle, draw, fill=agent4_color, inner sep=0.7mm] at (-2.8, 0.6) {};
    \node [rectangle, draw, fill=obstacle_color, inner sep=0.7mm] at (0.65, -0.7) {};
    \node [rectangle, draw, fill=obstacle_color, inner sep=0.7mm] at (2.35, -0.2) {};
    \end{tikzpicture}
    }
    }
    \subfloat[\scriptsize 6 agents with 2 dynamic obstacles.\label{fig:6agent2obs}]{
    \centering
    \resizebox{0.5\columnwidth}{!}{
    \begin{tikzpicture}
    \node (img) {\includegraphics[width=\columnwidth, height=0.22\textheight, keepaspectratio]{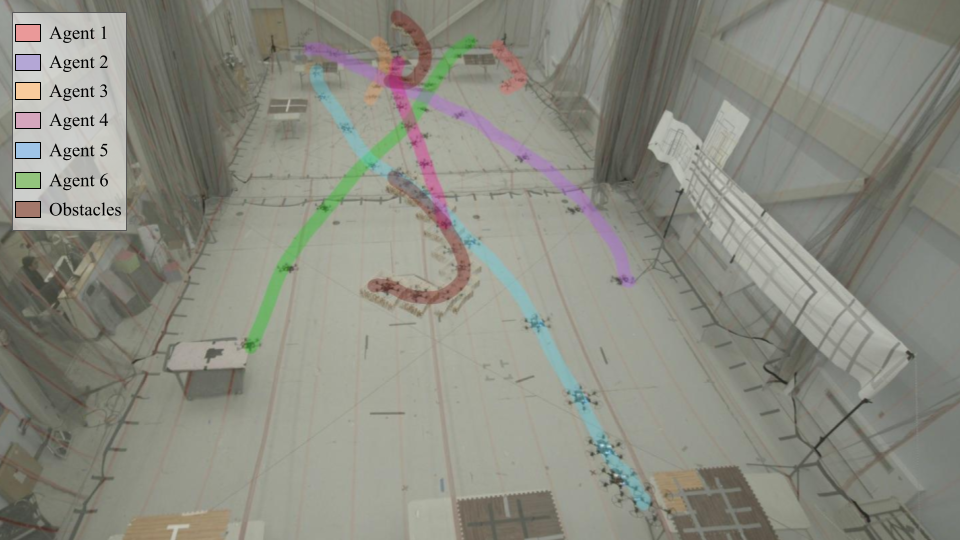}};
    \filldraw[color=black, fill=agent1_color] (0.2, 2.0) circle (2pt);
    \filldraw[color=black, fill=agent2_color] (1.3, -0.05) circle (2pt);
    \filldraw[color=black, fill=agent3_color] (-0.9, 2) circle (2pt);
    \filldraw[color=black, fill=agent4_color] (-0.3, 0.5) circle (2pt);
    \filldraw[color=black, fill=agent5_color] (-1.5, 1.9) circle (2pt);
    \filldraw[color=black, fill=agent6_color] (-2.0, -0.7) circle (2pt);
    \filldraw[color=black, fill=obstacle_color] (-0.9, -0.15) circle (2pt);
    \filldraw[color=black, fill=obstacle_color] (-0.7, 1.7) circle (2pt);
    \node [rectangle, draw, fill=agent1_color, inner sep=0.7mm] at (0.2, 1.6) {};
    \node [rectangle, draw, fill=agent2_color, inner sep=0.7mm] at (-1.6, 2) {};
    \node [rectangle, draw, fill=agent3_color, inner sep=0.7mm] at (-1, 1.4) {};
    \node [rectangle, draw, fill=agent4_color, inner sep=0.7mm] at (-0.8, 1.8) {};
    \node [rectangle, draw, fill=agent5_color, inner sep=0.7mm] at (1.6, -2.2) {};
    \node [rectangle, draw, fill=agent6_color, inner sep=0.7mm] at (0, 2.1) {};
    \node [rectangle, draw, fill=obstacle_color, inner sep=0.7mm] at (-0.7, 2.3) {};
    \node [rectangle, draw, fill=obstacle_color, inner sep=0.7mm] at (-0.8, 0.8) {};
    \end{tikzpicture}
    }
    }
    \caption{Agents move from $\bigcircle$ to \protect\tikz \protect\node [rectangle,draw] at (0,0) {};. Snapshots shown every \SI{500}{ms}.}
    \vspace{-1em}
\end{figure}

\section{CONCLUSIONS}
We introduced RMADER, a decentralized, asynchronous multiagent trajectory planner robust to communication delays. 
Our key contribution is ensuring safety through a delay check mechanism that maintains at least one collision-free trajectory. 
Simulations and hardware tests demonstrated RMADER's resilience to communication delays and effectiveness in dynamic environments. 



\section*{Appendix}\label{appendix}

Trajectory deconfliction cases 5-8 and 9-12 are presented in Figs.~\ref{fig:rmader_deconfliction-case-5-8} and \ref{fig:rmader_deconfliction-case-9-12}, respectively. 

\begin{figure}[H]
  \centering
  \begin{centering}      
  \resizebox{0.9\columnwidth}{!}{%
       \begin{tikzpicture}
       [
        greenbox/.style={shape=rectangle, fill=opt_color, draw=black},
        bluebox/.style={shape=rectangle, fill=check_color, draw=black},
         yellowbox/.style={shape=rectangle, fill=delaycheck_color, draw=black},
        ]
        
        \newcommand\Ay{2.5}
        \newcommand\Axo{1}
        \newcommand\Axc{3}
        \newcommand\Axr{4}
        \newcommand\Axe{5.5}
        
        \newcommand\By{0.7}
        \newcommand\Bxo{2.0}
        \newcommand\Bxc{3.4}
        \newcommand\Bxr{4.7}
        \newcommand\Bxe{6.2}
        
            \node[text=red] at (0.5,\Ay+0.2) {\scriptsize Agent A};
            \filldraw[fill=delaycheck_color, draw=black, opacity=0.2] (0,\Ay) rectangle (\Axo,\Ay-0.3);
            \filldraw[thick, fill=opt_color, draw=black] (\Axo,\Ay) rectangle (\Axc,\Ay-0.3);
            \filldraw[thick, fill=check_color, draw=black] (\Axc, \Ay) rectangle (\Axr, \Ay-0.3);
            \filldraw[thick, fill=delaycheck_color, draw=black] (\Axr, \Ay) rectangle (\Axe, \Ay-0.3);
            \filldraw[fill=opt_color, draw=black, opacity=0.2] (\Axe, \Ay) rectangle (\Axe+1.5, \Ay-0.3);
            \filldraw[fill=check_color, draw=black, opacity=0.2] (\Axe+1.5, \Ay) rectangle (\columnwidth, \Ay-0.3);
            \node[text=blue] at (0.5,\By+0.2) {\scriptsize Agent B};
            \filldraw[fill=check_color, draw=black, opacity=0.2] (0,\By) rectangle (\Bxo-1.5,\By-0.3);
            \filldraw[fill=delaycheck_color, draw=black, opacity=0.2] (\Bxo-1.5,\By) rectangle (\Bxo,\By-0.3);
            \filldraw[thick, fill=opt_color, draw=black] (\Bxo,\By) rectangle (\Bxc,\By-0.3);
            \filldraw[thick, fill=check_color, draw=black] (\Bxc, \By) rectangle (\Bxr, \By-0.3);
            \filldraw[thick, fill=delaycheck_color, draw=black] (\Bxr, \By) rectangle (\Bxe, \By-0.3);
            \filldraw[fill=opt_color, draw=black, opacity=0.2] (\Bxe, \By) rectangle (\columnwidth, \By-0.3);
        
        \draw[thick, densely dotted] (\Axr,-0.6) -- (\Axr,\Ay-0.3) node[] at (\Axr, -0.85) {\tiny t\textsubscript{traj\textsubscript{A\textsubscript{opt}}}};
        \draw[thick, densely dotted] (\Axe,-0.6) -- (\Axe,\Ay-0.3) node[] at (\Axe, -0.85) {\tiny t\textsubscript{traj\textsubscript{A\textsubscript{comm}}}};
            
        \draw[thick,->] (0,-0.6) -- (\columnwidth,-0.6) node[anchor=north east] {time};
        
        \draw[thick, ->, draw=red] (\Axr,\Ay-0.3) -- (\Axr,\Ay-1.2) node[midway,fill=white, fill opacity=0.8, text opacity=1, text=red] {\tiny \trajAOpt{}};
        \draw[thick, ->, draw=red] (\Axe,\Ay-0.3) -- (\Axe,\Ay-1.2) node[midway,fill=white, fill opacity=0.8, text opacity=1, text=red] {\tiny \trajAComm{}};
        \draw[thick, <-, draw=red] (\Bxc-0.2,\By) -- (\Bxc-0.2,\By+0.3)  node[anchor=south,text=black] {\tiny \makecell{case 5 \\ (not occur)}};
        \draw[thick, <-, draw=red] (\Bxc+0.85,\By) -- (\Bxc+0.85,\By+0.3) node[anchor=south,text=black] {\tiny case 6};
        \draw[thick, <-, draw=red] (\Bxr+0.2,\By) -- (\Bxr+0.2,\By+0.3) node[anchor=south,text=black] {\tiny case 7};
        \draw[thick, <-, draw=red] (\Bxe+0.4,\By) -- (\Bxe+0.4,\By+0.3) node[anchor=south,text=black] {\tiny \makecell{case 8 \\ (not occur)}};
        \draw[thick, <->, draw=black] (\Axe,-0.2) -- (\columnwidth,-0.2) node[midway, anchor=south, text=black] {\tiny \textbf{\textcolor{red}{\trajAOpt{}} will not arrive after \DCStepA{}}};
        
        \node[font=\bfseries,right] at (\Axo,\Ay-0.15) {\tiny O\textsubscript{A}};
        \node[font=\bfseries,right] at (\Axc,\Ay-0.15) {\tiny C\textsubscript{A}};
        \node[font=\bfseries,right] at (\Axr,\Ay-0.15) {\tiny DC\textsubscript{A}};

        \node[font=\bfseries,right] at (\Bxo,\By-0.15) {\tiny O\textsubscript{B}};
        \node[font=\bfseries,right] at (\Bxc,\By-0.15) {\tiny C\textsubscript{B}};
        \node[font=\bfseries,right] at (\Bxr,\By-0.15) {\tiny DC\textsubscript{B}};
        
        
        \node[color=gray] at (0.5,\Ay-0.15) {\scriptsize Prev. iter.};
        \node[color=gray] at (0.95\columnwidth,\Ay-0.15) {\scriptsize Next iter.};
        \node[color=gray] at (0.5,\By-0.15) {\scriptsize Prev. iter.};
        \node[color=gray] at (0.95\columnwidth,\By-0.15) {\scriptsize Next iter.};
        
    \end{tikzpicture}
    }
    \captionsetup{singlelinecheck=off}
    \caption[RMADER Deconfliction Case 5-8]{RMADER deconfliction Cases 5-8.}
    \label{fig:rmader_deconfliction-case-5-8}
    \end{centering}
    \vspace{-1em}
\end{figure}
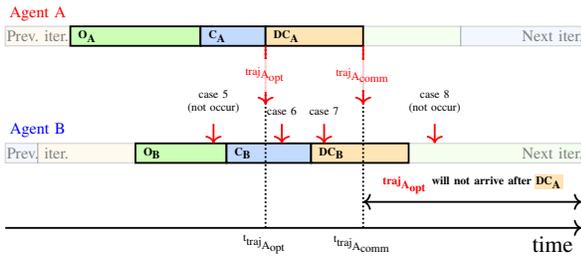

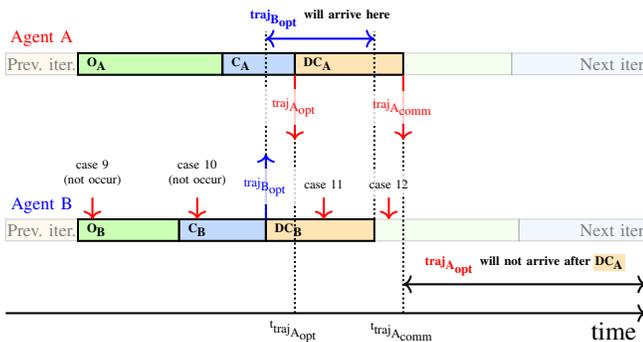
\begin{figure}[H]
  \centering
  \begin{centering}
  \resizebox{\columnwidth}{!}{%
       \begin{tikzpicture}
       [
        greenbox/.style={shape=rectangle, fill=opt_color, draw=black},
        bluebox/.style={shape=rectangle, fill=check_color, draw=black},
         yellowbox/.style={shape=rectangle, fill=delaycheck_color, draw=black},
        ]
        
        \newcommand\Ay{3}
        \newcommand\Axo{1}
        \newcommand\Axc{3}
        \newcommand\Axr{4}
        \newcommand\Axe{5.5}
        
        \newcommand\By{0.7}
        \newcommand\Bxo{1.0}
        \newcommand\Bxc{2.4}
        \newcommand\Bxr{3.6}
        \newcommand\Bxe{5.1}
        
            \node[text=red] at (0.5,\Ay+0.2) {\scriptsize Agent A};
            \filldraw[fill=delaycheck_color, draw=black, opacity=0.2] (0,\Ay) rectangle (\Axo,\Ay-0.3);
            \filldraw[thick, fill=opt_color, draw=black] (\Axo,\Ay) rectangle (\Axc,\Ay-0.3);
            \filldraw[thick, fill=check_color, draw=black] (\Axc, \Ay) rectangle (\Axr, \Ay-0.3);
            \filldraw[thick, fill=delaycheck_color, draw=black] (\Axr, \Ay) rectangle (\Axe, \Ay-0.3);
            \filldraw[fill=opt_color, draw=black, opacity=0.2] (\Axe, \Ay) rectangle (\Axe+1.5, \Ay-0.3);
            \filldraw[fill=check_color, draw=black, opacity=0.2] (\Axe+1.5, \Ay) rectangle (\columnwidth, \Ay-0.3);
            \node[text=blue] at (0.5,\By+0.2) {\scriptsize Agent B};
            \filldraw[fill=delaycheck_color, draw=black, opacity=0.2] (0,\By) rectangle (\Bxo,\By-0.3);
            \filldraw[thick, fill=opt_color, draw=black] (\Bxo,\By) rectangle (\Bxc,\By-0.3);
            \filldraw[thick, fill=check_color, draw=black] (\Bxc, \By) rectangle (\Bxr, \By-0.3);
            \filldraw[thick, fill=delaycheck_color, draw=black] (\Bxr, \By) rectangle (\Bxe, \By-0.3);
            \filldraw[fill=opt_color, draw=black, opacity=0.2] (\Bxe, \By) rectangle (\Bxe+2.0, \By-0.3);
            \filldraw[fill=check_color, draw=black, opacity=0.2] (\Bxe+2.0, \By) rectangle (\columnwidth, \By-0.3);
        
        \draw[thick, densely dotted] (\Axr,-0.6) -- (\Axr,\Ay-0.3) node[] at (\Axr, -0.85) {\tiny t\textsubscript{traj\textsubscript{A\textsubscript{opt}}}};
        \draw[thick, densely dotted] (\Axe,-0.6) -- (\Axe,\Ay-0.3) node[] at (\Axe, -0.85) {\tiny t\textsubscript{traj\textsubscript{A\textsubscript{comm}}}};
        \draw[thick, densely dotted] (\Bxr,\By) -- (\Bxr,\Ay+0.3);
        \draw[thick, densely dotted] (\Bxe,\By) -- (\Bxe,\Ay+0.3);
        \draw[thick, <->, draw=blue] (\Bxr,\Ay+0.2) -- (\Bxe,\Ay+0.2) node[midway, anchor=south, text=black] {\tiny \textbf{\textcolor{blue}{\trajBOpt{}} will arrive here}};
            
        \draw[thick,->] (0,-0.6) -- (\columnwidth,-0.6) node[anchor=north east] {time};
        
        \draw[thick, ->, draw=red] (\Axr,\Ay-0.3) -- (\Axr,\Ay-1.2) node[midway,fill=white, fill opacity=0.8, text opacity=1, text=red] {\tiny \trajAOpt{}};
        \draw[thick, ->, draw=red] (\Axe,\Ay-0.3) -- (\Axe,\Ay-1.2) node[midway,fill=white, fill opacity=0.8, text opacity=1, text=red] {\tiny \trajAComm{}};
        \draw[thick, ->, draw=blue] (\Bxr,\By) -- (\Bxr,\By+0.9) node[midway,fill=white, fill opacity=0.8, text opacity=1, text=blue] {\tiny \trajBOpt{}};
        \draw[thick, <-, draw=red] (\Bxo+0.2,\By) -- (\Bxo+0.2,\By+0.3)  node[anchor=south,text=black] {\tiny \makecell{case 9 \\ (not occur)}};
        \draw[thick, <-, draw=red] (\Bxc+0.25,\By) -- (\Bxc+0.25,\By+0.3) node[anchor=south,text=black] {\tiny \makecell{case 10 \\ (not occur)}};
        \draw[thick, <-, draw=red] (\Bxr+0.8,\By) -- (\Bxr+0.8,\By+0.3) node[anchor=south, fill=white, fill opacity=0.8, text opacity=1, text=black] {\tiny case 11};
        \draw[thick, <-, draw=red] (\Bxe+0.2,\By) -- (\Bxe+0.2,\By+0.3) node[anchor=south, fill=white, fill opacity=0.8, text opacity=1, text=black] {\tiny case 12};
        \draw[thick, <->, draw=black] (\Axe,-0.2) -- (\columnwidth,-0.2) node[midway, anchor=south, text=black] {\tiny \textbf{\textcolor{red}{\trajAOpt{}} will not arrive after \DCStepA{}}};
        
        \node[font=\bfseries,right] at (\Axo,\Ay-0.15) {\tiny O\textsubscript{A}};
        \node[font=\bfseries,right] at (\Axc,\Ay-0.15) {\tiny C\textsubscript{A}};
        \node[font=\bfseries,right] at (\Axr,\Ay-0.15) {\tiny DC\textsubscript{A}};

        \node[font=\bfseries,right] at (\Bxo,\By-0.15) {\tiny O\textsubscript{B}};
        \node[font=\bfseries,right] at (\Bxc,\By-0.15) {\tiny C\textsubscript{B}};
        \node[font=\bfseries,right] at (\Bxr,\By-0.15) {\tiny DC\textsubscript{B}};
        
        
        \node[color=gray] at (0.5,\Ay-0.15) {\scriptsize Prev. iter.};
        \node[color=gray] at (0.95\columnwidth,\Ay-0.15) {\scriptsize Next iter.};
        \node[color=gray] at (0.5,\By-0.15) {\scriptsize Prev. iter.};
        \node[color=gray] at (0.95\columnwidth,\By-0.15) {\scriptsize Next iter.};
        
    \end{tikzpicture}
    }
    \captionsetup{singlelinecheck=off}
    \caption[RMADER Deconfliction Case 9-12.]{RMADER deconfliction Cases 9-12.}
    \label{fig:rmader_deconfliction-case-9-12}
    \end{centering}
    \vspace{-1em}
\end{figure}

\balance

\bibliographystyle{IEEEtran}


\bibliography{bib.bib} 

\end{document}